\documentclass{article}

\usepackage{arxiv}

\usepackage[utf8]{inputenc} 
\usepackage[T1]{fontenc}    
\usepackage{hyperref}       
\usepackage{url}            
\usepackage{booktabs}       
\usepackage{amsfonts}       
\usepackage{nicefrac}       
\usepackage{microtype}      
\usepackage{lipsum}		
\usepackage{graphicx}
\usepackage{natbib}
\usepackage{doi}

\usepackage{amsmath, amsthm}
\usepackage{tikz}
\usepackage{chemfig}
\usepackage{placeins}
\usetikzlibrary{calc}
\usetikzlibrary{matrix}

\newtheorem{proposition}{Proposition}
\newtheorem{lemma}{Lemma}

\newcommand{\pmsd}[2]{#1\,{\scriptsize\,#2}}

\title{Neural Petri flows for chemical reactions}

\author{Jose Eduardo Escrig Molina \\
	Bioinformatics Group\\
    Bioprocess Engineering Group\\
	Wageningen Unversity\\
	Wageningen, the Netherlands \\
	\texttt{joseeduardo.escrigmolina@wur.nl} \\
	\And
	\href{https://orcid.org/0000-0003-1737-4407}{\includegraphics[scale=0.06]{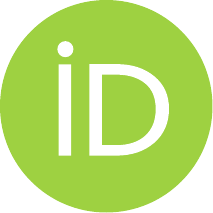}\hspace{1mm}Daniel Probst} \\
	Bioinformatics Group\\
	Wageningen Unversity\\
	Wageningen, the Netherlands \\
	\texttt{daniel.probst@wur.nl} \\
}

\renewcommand{\shorttitle}{Neural Petri flows for chemical reactions}

\hypersetup{
pdftitle={Neural Petri flows for chemical reactions},
pdfauthor={Jose Eduardo Escrig Molina, Daniel Probst},
pdfkeywords={chemistry, neural petri nets, reaction prediction atom mapping, reaction classification, enzymology},
}

\begin{document}
\maketitle

\begin{abstract}
	Petri nets, in which transitions move tokens between places, have been used to describe chemical processes such as reactions. Petri net terms map well to chemistry: Places are the bonds between atoms and the free valence of each atom, a token is a unit of bond order, a transition forms or breaks a bond, the conserved quantities are the valence budgets of the atoms, and the enabling rule, that a transition needs tokens on its input places, is the valence rule. These semantics are not guaranteed by learned models of reactions or neural networks that are built on Petri nets, which use the net as a scaffold for message passing. Here, we ask what architecture remains a Petri net for every value of its weights. We find the answer in the theory, where all semantics of a net share the firing form $m^\prime=m+C\sigma$, locality, as enabling reads only the inputs of a transition, and the enabling rule, and we prove that conservation forces the firing form and that non-negativity forces the enabling rule on local rate laws. What this leaves free is the rate law, which is the propensity of each transition to fire. We introduce Neural Petri Flow, which learns this rate law, or a readout for classification, and hard-wires the rest as parameter-free layers. On what we denote a valence net, atom mapping, reaction classification, and forward prediction become three tasks on one firing vector, which is an unordered list of bond changes. Without training, the minimum firing vector maps 88.8~\% of the curated Golden set against 85.6~\% for RXNMapper, and 88.7 against 77.9~\% of the enzymatic reactions of EnzymeMap. On USPTO-480K, NPF trained on these firing vectors predicts 87.7~\% of the products and 67.4~\% when trained on a 1~\% subset of the training reactions. EC numbers of ECREACT are predicted at the third level for 90.2~\% of reactions, 5.6 points ahead of the best published method. With electrons as tokens, the same token game predicts 90.5~\% of the elementary steps of FlowER first, ahead of the published baseline, and every top-1 prediction is a valid molecule without a filter.
\end{abstract}

\keywords{chemistry \and neural petri nets \and reaction prediction atom mapping \and reaction classification \and enzymology}

\section{Introduction}
Petri nets were devised in order to describe chemical processes~\citep{Petri2008}. While Petri nets became a standard tool for concurrency, chemistry kept them to describe reaction networks, where places are species~\citep{Koch2010,Baez2019}. We take this approach down to chemical bonds of a single reaction, and ask what kind of neural network it takes for its outputs to obey the rules of that net for every value of its weights, as the rules are computed instead of learned. Learning only fills in what the rules leave open, such as which bonds change.

\paragraph{A reaction as a net.} Given the heavy atoms, which are all atoms other than hydrogen, of the precursors of a chemical reaction, we define a bond place $B_{ij}$ for every pair $(i,j)$ of atoms and a slack place $S_i$ for every atom $i$. Tokens on $B_{ij}$ are units of bond order, where a single bond is one token, a double bond is two tokens, and an aromatic bond one and a half. Tokens on $S_i$ are units of valence of atom $i$ not spent on bonds with other heavy atoms, namely, its hydrogens, its negative charge, and a capacity to become cationic or hypervalent, one token for nitrogen and none for carbon. A marking is the vector of all token counts or, in chemistry terms, a set of molecules. A reaction is then a pair of markings, $m_A$ and $m_B$ for the reactants and the products, respectively. The two are written on different atoms, and an atom map, which assigns every product atom to the matching precursor atom, makes them markings of one net. Transitions $t$ are elementary bond changes, where, for every atom pair $(i,j)$ and every amount $\delta$, a transition takes $\delta$ tokens from each of $S_i$ and $S_j$ and puts $\delta$ tokens on $B_{ij}$, forming a bond of order $\delta$. The reverse operation breaks the bond. Firing a transition is the realisation of that bond change, and the firing-count vector $\sigma$, firing vector for short, counts the firings of every transition, describing a reaction as an unordered list of bond changes. The state equation $m_B=m_A+C\sigma$ reads products = precursors + bond changes, where the matrix $C$ records which places each transition empties and fills. The $P$-invariants are the quantities no firing can change, namely, the valence budgets of the atoms, each the bond orders plus its free valence of an atom. The enabling rule that states that a transition may only fire when its input places hold enough tokens, is the valence rule. For example, a carbon with four bonds to other heavy atoms has to break one before it can form another. This net, the valence net of the reaction, has no parameters, as it is fixed by the precursor atoms and the rules of valence.

\begin{figure}[t]
    \centering
    \setchemfig{atom sep=1.75em, bond offset=1pt, double bond sep=2.1pt, bond style={line width=0.6pt}}%
\begin{tikzpicture}[
    font=\footnotesize, >={Stealth[length=1.5mm, width=1.2mm]},
    place/.style={circle, draw, line width=0.6pt, minimum size=6mm, inner sep=0pt},
    bondplace/.style={place, fill=black!8},
    trans/.style={rectangle, draw, line width=0.6pt, fill=black!85, minimum width=1.3mm, minimum height=6mm,
                  inner sep=0pt},
    tok/.style={circle, fill=black, inner sep=0.85pt},
    arc/.style={->, line width=0.6pt},
    fixed/.style={rectangle, draw=black!50, fill=black!5, align=center, minimum height=6mm,
                  inner sep=2.4pt, line width=0.6pt},
    learned/.style={fixed, draw=magenta!75!black, fill=magenta!10},
    io/.style={inner sep=1pt, align=center},
    flow/.style={->, line width=0.6pt, black!75},
    head/.style={font=\footnotesize\bfseries, anchor=west, inner sep=0pt},
    note/.style={font=\scriptsize, text=black!70, inner sep=0pt, align=left},
    task/.style={font=\scriptsize\itshape, inner sep=0pt}]

  \node[inner sep=0pt] (pre) at (2.45, 5.13) {\chemfig{H_3C-C(=[:90]O)-Cl}\enspace\raisebox{0.5ex}{$+$}\enspace\chemfig{H_2N-CH_3}};
  \node[inner sep=0pt] (pro) at (11.2, 5.13)
       {\chemfig{H_3C-C(=[:90]O)-N(-[:90]H)-CH_3}\enspace\raisebox{0.5ex}{\textcolor{black!45}{$+$}}\enspace\textcolor{black!45}{\chemfig{HCl}}};
  \draw[flow] ($(pre.east)+(0.25,-0.15)$) --
       node[above=1pt, font=\scriptsize] {bond changes, the firing vector $\sigma$} ($(pro.west)+(-0.25,-0.15)$);
  \node[note, anchor=north] at ($(pre.south)+(0,-0.1)$) {precursors, marking $m_A$};
  \node[note, anchor=north] at ($(pro.south)+(0,-0.1)$) {products, marking $m_B$};
  \node[font=\small] at (6.85, 4.7) {$m_B = m_A + C\sigma$};

  \node[head] at (0, 4.05) {(a) valence net};

  \node[bondplace, label={[inner sep=1pt, font=\scriptsize]below:$B_{\mathrm{C,Cl}}$}] (bccl) at (0.3, 1.95) {};
  \node[trans] (tm) at (1.2, 1.95) {};
  \node[place, label={[inner sep=1pt, font=\scriptsize]below:$S_{\mathrm{C}}$}] (sc) at (2.1, 1.95) {};
  \node[trans] (tp) at (3.0, 1.95) {};
  \node[bondplace, label={[inner sep=1pt, font=\scriptsize]below:$B_{\mathrm{C,N}}$}] (bcn) at (3.85, 1.95) {};
  \node[place, label={[inner sep=1pt, font=\scriptsize]left:$S_{\mathrm{Cl}}$}] (scl) at (1.2, 3.0) {};
  \node[place, label={[inner sep=1pt, font=\scriptsize]right:$S_{\mathrm{N}}$}] (sn) at (3.0, 3.0) {};

  \draw[arc] (bccl) -- (tm);
  \draw[arc] (tm) -- (sc);
  \draw[arc] (tm) -- (scl);
  \draw[arc] (sc) -- (tp);
  \draw[arc] (sn) -- (tp);
  \draw[arc] (tp) -- (bcn);

  \node[tok] at (bccl) {};
  \node[tok] at (scl) {};
  \node[tok] at ($(sn)+(-0.1,-0.06)$) {};
  \node[tok] at ($(sn)+(0.1,-0.06)$) {};
  \node[tok] at ($(sn)+(0,0.11)$) {};

  \node[anchor=west, inner sep=0pt, font=\scriptsize] at ($(tm.east)+(0.1,0.52)$) {$t^{-1}_{\mathrm{C,Cl}}$};
  \node[anchor=west, inner sep=0pt, font=\scriptsize] at ($(tp.east)+(0.1,0.52)$) {$t^{+1}_{\mathrm{C,N}}$};

  \node[head] at (4.5, 4.05) {(b) token game};

  \node[io] (m) at (4.85, 3.05) {$m$};
  \node[learned, anchor=west, minimum width=2.6cm] (rate) at (5.3, 3.05) {rate law $\lambda_\theta(m)$};
  \node[fixed, anchor=west, minimum width=2.6cm] (mask) at (5.3, 2.2) {enabling mask};
  \node[fixed, anchor=west, minimum width=2.6cm] (fire) at (5.3, 1.35) {fire $t$: $m + C e_t$};
  \draw[flow] (m) -- (rate);
  \draw[flow] (rate) -- (mask);
  \draw[flow] (mask) -- (fire);
  \draw[flow, rounded corners=2pt] (fire.west) -| (m.south);
  \node[note, rotate=90, anchor=south] at (4.78, 2.05) {until \textsc{stop}};

  \node[head] at (8.5, 4.05) {(c) three tasks, one $\sigma$};

  \node[fixed] (map) at (11.05, 3.05) {atom mapping: minimum $\sigma$};
  \node[io, anchor=south] (min) at ($(map.north)+(0,0.24)$) {$m_A, m_B$};
  \draw[flow] (min) -- (map);

  \node[io, font=\small] (sigma) at (11.05, 2.2) {$\sigma$};
  \draw[flow] (map) -- (sigma);

  \node[learned, minimum width=2.25cm] (tg) at (9.75, 1.35) {token game (b)};
  \node[learned, minimum width=2.25cm] (ro) at (12.35, 1.35) {classifier on $\sigma$};
  \draw[flow] (sigma) -- node[above left=-1pt, note] {targets} (tg.north);
  \draw[flow] (sigma) -- node[above right=-1pt, note] {input} (ro.north);

  \node[task, anchor=north] at ($(tg.south)+(0,-0.12)$) {forward prediction};
  \node[task, anchor=north] at ($(ro.south)+(0,-0.12)$) {classification};

  \pgfresetboundingbox
  \useasboundingbox (-0.06, 0.7) rectangle ++(13.6, 4.8);
\end{tikzpicture}%
    \caption{(a) A reaction as two markings of its valence net. (b) Only the rate law (magenta) is learned. (c) The minimum firing vector $\sigma$ maps atoms, trains the token game and feeds the classifier.}
    \label{fig:framework}
\end{figure}

Sequence models can write strings that are not a valid molecule, and electron-flow and graph-edit models can predict bond changes that are not guaranteed to respect valence for every output. Stated in net terms, the outputs of these models are not markings (states) that any firing sequence can reach. Current Petri graph neural networks pass messages between places and transitions, using the net as a computation graph, but the states they update are not markings~\citep{Hu2020,AdemovicTahirovic2025}. We therefore ask whether a model of reactions can, for every value of its weights, guarantee products with valid valences and bond changes accounting for the observed change exactly, and whether this guarantee increases or decreases accuracy.

\paragraph{Approach.} Discrete, continuous, and stochastic nets, the three semantics of a net, only differ in their rate law, which is the propensity of a transition to fire given the tokens on its input places, and share the firing form, the enabling rule, and locality. Therefore, Neural Petri Flow (NPF) only learns the rate law, or a readout for classification, and computes the rest without trainable parameters. Atom mapping solves the state equation for the firing vector that forms or breaks the fewest bonds without learned weights, and gives the forward model its training target. For forward prediction, NPF plays a token game in which it fires one bond change at a time with the learned rate law choosing each move (Figure~\ref{fig:framework}). For chemistry, the rate law reads the whole marking and as the token game fires one bond change at a time, non-negativity still requires the enabling rule, which is imposed as a mask (Section~\ref{sec:neural_petri_flow}). With electrons as tokens, the same token game predicts the elementary steps of reaction mechanisms (Section~\ref{sec:chemical_reactions}).

\paragraph{More than a Petri net in name only.} To avoid introducing a generic neural network that is a Petri net in name only, we support every Petri component of the valence net that carries a guarantee with a proof, a reason no neural network without the Petri net can do the same, a numerical check, and every component except conservation with a comparison to a generic network of equal size. For the valence rule, we prove that, for every value of its weights, the token game never gives a carbon with a valence budget of four a fifth bond (Proposition~\ref{prop:validity}), and that the rule removes orders of bond changes but never a product within the valence budgets (Lemma~\ref{lem:realisability}. With untrained weights, no marking that the token game visits breaks the rule by more than half a token (Appendix~\ref{sec:app_load_bearing}). Graph-edit and electron-flow models also change one bond or electron pair at a time, however, most fix or learn an order of edits~\citep{do2019,Sacha2021}, learn the valence rule, as FlowER does~\citep{Joung2025}, or check only that the giving site has electrons, and not that the receiving atom has room, as in MAELLE~\citep{Nguyen2026}. In NPF, the valence rule holds at every step for every weight, and the state equation makes the order irrelevant.

\paragraph{Contributions.}

\begin{itemize}
\item \textbf{Neural networks that remain Petri nets.} Neural Petri Flow only learns what the theory of the net leaves open, namely, one rate law that is shared across nets with conflict or, for classification tasks, a readout of the firing vector, and computes the firing form, the enabling rule, and the state equation as layers without trainable parameters. The conservation of every $P$-invariant forces the firing form, and keeping markings non-negative forces the enabling rule on any continuous, local rate law shared across nets. For every value of the weights, every marking visited by the token game is reachable in the valence net once every atom holds half a token of extra free valence for aromatic bonds of order $\frac{3}{2}$.
\item \textbf{The valence net, a single object for three tasks.} Atom mapping, reaction classification, and reaction forward prediction become three tasks on one firing vector, which is the unordered list of bond changes. Mapping infers it, classification reads a label off it, and the forward model predicts it. Like other atom mappers of minimum chemical distance~\citep{Jochum1980,First2011,Mann2014,Ali2025}, our mapper minimises bond changes, without learned weights, and it maps reactions with a missing reactant or by-product (Appendix~\ref{sec:minimum_firing_vector}) and proves the map optimal for 100~\% of a curated reaction set. Its tied optima are training targets of the forward model, which therefore needs neither the maps of a learned model nor an order of edits.
\item \textbf{Mechanisms as nets of electrons.} Taking electrons as tokens turns the elementary steps of a mechanism into firings of an electron net, in which fishhook and curly arrows are the transitions, the number of electrons is a $P$-invariant, and the octet rule is the enabling rule. With the rate law and token game of the forward model, it predicts $90.5$~\% of the elementary steps of FlowER first and recovers $94.5$~\% of the pathways, ahead of the published baselines, and every end marking is a Lewis structure by construction.
\end{itemize}

\section{Related work}
\label{sec:related-work}

\paragraph{Petri nets and learning.} \citet{AbdulHameed2025} introduced a max-plus network learning the timing of timed event graphs, which are conflict-free nets. Trainable mass-action reaction networks as introduced by \citet{Nagipogu2025} and \citet{Dack2026} and a neural Petri net computed by the state equation by \citet{MedinaMarin2023} are also nets for every value of their parameters, each for one system. Message passing along Pre and Post was introduced as a Petri net convolution for scheduling by \citet{Hu2020}, and PGNN by \citet{AdemovicTahirovic2025}, which we use as a baseline, passes messages between the places and transitions of attributed nets.

\paragraph{Conservation and structure in neural networks.} Hybrid models where a neural network predicts rates inside first-principle balances are the closest to the token-game layer from Appendix~\ref{sec:forward_semantics}~\citep{Psichogios1992}. \citet{Sturm2022conservation} discuss a fixed stoichiometric matrix on learned fluxes of atoms between molecules in atmospheric photochemistry, with kinetic models pass learned rates through such a layer~\citep{Kircher2024,Fonseca2026}. Chemical reaction neural networks hard-wire mass action and learn the associated stoichiometry~\citep{Ji2021,Dppel2024}. These models realise the sufficiency direction of Proposition~\ref{prop:conservation}. 

\paragraph{Reactions.} Petri nets are used to model reaction networks with species as places~\citep{Koch2010,Baez2019,Andersen2025}, while reversing Petri nets enable the symbolic modelling of covalent bonds~\citep{Melgratti2022}. We use a net a level below the species, for the bonds in a single reaction, where the state equation on the bond places is the bond--electron matrix equation introduced by~\citet{Dugundji1973}. Our firing vector is, up to hydrogens and charges, the imaginary transition state or the condensed graph of reaction, counted as firings~\citep{Fujita1986,Varnek2005}. Atom mappers are learned~\citep{Schwaller2021Extraction,Chen2024Precise,Astero2025} or compute the mapping of minimum chemical distances~\citep{Jochum1980}, solved exactly as integer or constraint programs~\citep{First2011,Latendresse2012,Mann2014,Ali2025}. As for reaction classifiers, they use fingerprints~\citep{Schneider2015,Probst2022reaction} or learned representations~\citep{Schwaller2021,Heid2021machine}, while our readout is a learned difference fingerprint with a proof that what is left unchanged by a reaction cancels. Finally, forward reaction predictors generate SMILES~\citep{Schwaller2019,Tetko2020,Irwin2022,Tu2022,Zhong2022}, apply templates~\citep{Chen2022}, predict graph edits~\citep{Jin2017,Coley2019,do2019,Sacha2021,Bradshaw2018}, or redistribute electrons~\citep{Bi2021,Meng2023,Joung2025,Cheng2025Predicting}. While the latter conserve electrons, none of them enforces valence on every discrete output. Recently, ~\citet{Nguyen2026} introduced MAELLE, which learns a Markov chain of electron moves from unordered move sets. It is a stochastic net in our terms, with enabling rules on the source only, no valence invariant, and products ranked by their sampling frequency, training on the atom maps of RXNMapper, where our targets come from the net.

\section{Preliminaries}
\label{sec:preliminaries}

\paragraph{Nets.} A net is defined as $N=(P,T,\mathrm{Pre},\mathrm{Post})$, where places $P$ and transitions $T$ are finite disjoint sets and $\mathrm{Pre},\mathrm{Post}:P\times T\rightarrow\mathbb{R}_{\geq0}$ are arc weights. For the input and output places of transition $t$, we write $^{\bullet}t=\{p:\mathrm{Pre}(p,t)>0\}$ and $t^{\bullet}=\{p:\mathrm{Post}(p,t)>0\}$, respectively. The incidence matrix of the net is $C=\mathrm{Post}-\mathrm{Pre}\in\mathbb{R}^{P\times T}$, where $C_t$ are the columns.

\paragraph{Markings, enabling, and firing.} A marking $m\in\mathbb{R}_{\geq0}^P$ assigns $m_p$ tokens to place $p$. $\mathrm{enabl}(t,m):=\min_{p\in ^{\bullet}t}m_p/\mathrm{Pre}(p,t)$ is the enabling degree of $t$ at $m$, and $\infty$ if $^\bullet t=\emptyset$. In the discrete case, $t$ is enabled if $\mathrm{enabl}(t,m)\geq1$ and fires in amount $\alpha=1$ \citep{David2010}, and in the continuous case in any amount $0<\alpha\leq\mathrm{enabl}(t,m)$. A firing yields $m^\prime=m+\alpha C_t$, written as $m\xrightarrow{\alpha t}m^\prime$. A firing sequence $s=\alpha_1 t_1\dots \alpha_n t_n$ is enabled at $m_0$ if every step can fire, in its amount, at the marking reached so far, and we call the markings it reaches reachable from $m_0$. The firing-count vector $\sigma(s)\in\mathbb{R}_{\geq0}^T$ has the entries $\sigma_t=\sum_{k:t_k=t}\alpha_k$.

\paragraph{State equation and invariants.} The state equation is given by summing the firing rule along a sequence
\begin{equation}
\label{eq:state}
    m_0\xrightarrow{s}m\Rightarrow m=m_0+C\sigma(s).
\end{equation}
The converse of this fails in general, as a non-negative solution $\sigma$ of $m=m_0+C\sigma$ does not need to be the count vector of any enabled sequence~\citep{Murata1989petri}, however, on the valence net it holds for every firing vector that changes each bond place at most once and reaches a non-negative marking (Lemma~\ref{lem:realisability}). Two sequences enabled at $m_0$, with the same count vector, reach the same marking by Equation~\ref{eq:state}, and we call them count-equivalent. A vector $x\in\mathbb{R}^P$ with $x^\top C=0$ is a P-invariant. By \eqref{eq:state}, $x^\top m=x^\top m_0$ for every $m$ of the form $m_0+C\sigma$, specifically for every reachable marking. A vector $y\in\mathbb{R}^T$ with $Cy=0$ is a T-invariant. By \eqref{eq:state}, an enabled firing sequence returns the net to its starting marking exactly when its count vector is a T-invariant. In chemistry terms, different orders of the same bond changes are count-equivalent sequences.

\paragraph{Timed semantics.} The stochastic net introduced by \citet{Molloy1982} is a continuous-time Markov chain on $\mathbb{N}^P$ that jumps from $m$ to $m+C_t$ at a rate $\lambda_t(m)$, where $\lambda_t(m)=0$ unless $t$ is enabled at $m$. $N_t(\tau)$ counts the firings of $t$ up to time $\tau$, so $m(\tau)=m(0)+CN(\tau)$, conserving $P$-invariants pathwise, which ensures that every atom keeps its valence budget after every single bond change and on every sampled route. The rate law $\lambda$ is local if $\lambda_t$ depends on $m$ through $(m_p)_{p\in^\bullet t}$ only. The rate law represents the chemistry, namely, how likely each bond change is given the molecules present.

\paragraph{Two learning problems.} The net, the static attributes $e_p$ and $a_t$ of places and transitions, respectively, and the markings are observed, while the rate law is not. The forward problem to predict $m(\tau+\Delta)$ from $m(\tau)$, which is, in chemistry terms, predicting the products from the precursors. The inverse problem is the prediction of the bond changes, or firing counts $\sigma^*=N(\tau_B)-N(\tau_A)$ given the precursors $A$ and the products $B$.

In reactions, the products are recorded on their own atoms. Then, an atom map, an injection $\pi$ of the product atoms onto the precursor atoms of the matching element, places them on the precursor places as $P_\pi m_B$. The inverse problem is atom mapping, that is, to find $\pi$ and $\sigma^*$ with $P_\pi m_B=m_A+C\sigma^*$ on the bond places. As we show in Appendix~\ref{sec:markings_identify}, the markings fix neither $\pi$ nor the part of $\sigma^*$ in $\ker C$. Classification reads a label off $\sigma^*$.

\section{Neural Petri flow}
\label{sec:neural_petri_flow}
The semantics introduced in Section~\ref{sec:preliminaries} share state changes of the form $C\sigma$ with $\sigma\geq 0$, which we call the firing form, and locality, as enabling reads only the input places of a transition. We hard-wire these while learning the rate law, the single object the theory leaves free. For classification, a readout is learned instead (Section~\ref{sec:state_equation_readout}). Appendix~\ref{sec:general_nets} tests the layers with a local rate law on measured metabolic fluxes.

\subsection{Enforcing the firing form and the enabling rule}
\label{sec:enforcing}
We show that the firing form and the enabling rule are forced, rather than chosen. If a marking is updated, every $P$-invariant at every marking is kept iff it is a combination of the columns of $C,u(m)=Cv(m)$ (Proposition~\ref{prop:conservation}). This means that a model exactly keeps every valence budget for all weights whenever it changes the marking by bond changes, while a generic readout with an output bias would conserve only for a set of weights of measure zero. The enabling rule follows from non-negativity. If a rate law is local and shared across nets, continuous, and non-negative, it keeps every marking non-negative only if it is zero whenever an input place is empty (Proposition~\ref{prop:enabling_rule}). In chemistry terms, a negative slack place is an atom with more bonds than allowed by the valence rule. Proposition~\ref{prop:conservation} extends Theorem 3.1 of \citet{Horie2024}, which states that message passing on a graph conserves only a total with antisymmetric fluxes, to all $P$-invariants of a net, and it keeps a learned update in the stoichiometric subspace of reaction network theory~\citep{Feinberg2019}. \citet{Angeli2007} assume a zero rate when a reactant is absent, and the absence of negative cross-effects characterises mass-action kinetics~\citep{Craciun2020}. Proposition~\ref{prop:enabling_rule} derives this zero rate for a rate law that is shared across nets, given that learned weights are the same on every net. 

However, for reactions the rate law is not local, as the reactivity of a bond depends on its molecule and on the available reagents, so Proposition~\ref{prop:cancellation} does not apply and we impose the enabling rule as a mask, which sets the rate of every transition that is not enabled to zero (Figure~\ref{fig:tokengame}b). Such masks are standard for valence in molecule generation, where bonds are only added~\citep{Liu2018}, and for enabled transitions in reinforcement learning on nets~\citep{Sachweh2024,Lassoued2024}. This is the only way to stay valid for all weights for one firing at a time, as a not enabled bond-forming transition takes more tokens from a slack place than it holds (Appendix~\ref{sec:proofs}).

\subsection{The state-equation readout}
\label{sec:state_equation_readout}

In order to classify a reaction, NPF reads the change described by the state equation without the atom map that would place both markings on a single net. A local encoder, which is message passing along bonds for $K$ rounds, assigns every atom and bond of each side a learned descriptor $\psi$, and a small multilayer perceptron maps the difference $r=\sum_{i\in V_B}\psi_B(i)-\sum_{j\in V_A}\psi_A(j)$ to the class. For all weights and every atom map, an atom whose surroundings within $R$ bonds are the same on both sides cancels exactly against its partner, with $R=K$ for atom terms and $R=K+1$ for bond terms (Proposition~\ref{prop:cancellation}). What remains are the atoms whose bonds $C\sigma(\pi)$ or attributes change, those within $R$ bonds of them, and the atoms that leave. Therefore, the readout $r$ depends only on these atoms and is computed without $\pi$ or $\sigma(\pi)$. A readout that pools each side, e.g. $\rho\left(\sum_B\psi,\sum_A\psi\right)$, is, in general, not invariant to a far decoration (Appendix~\ref{sec:proofs}).

Solvents and catalysts without a partner do not cancel, so a participation gate $w_M\in(0,1)$ for every precursor is trained on the maps of the mapper and moves them to a separate vector. Proposition~\ref{prop:cancellation} therefore only holds for the pure readout $w\equiv1$. We introduce a second classifier that reads the firing vector $\hat{\sigma}$ of the mapper instead, with one learned term per transition that fires.

\section{Reactions as firing sequences of valence and electron nets}
\label{sec:chemical_reactions}

\paragraph{The valence net}
A bond place $B_{ij}$ holds the bond order $b_{ij}\in\{0,1,\frac{3}{2},2,3\}$, and a slack place $S_i$ holds the free valence $s_i=h_i+\max(-q_i,0)+c_i$ of atom $i$, from its hydrogens $h_i$ and negative charge $q_i$, and a capacity of $c_i$ of 1 for N, O, Cl, Br, and I, 2 for S and P, and 0 for all other elements. For every pair of atoms $i<j$ and every $\delta\in\{\frac{1}{2},1,\frac{3}{2},2,3\}$, which is the difference between two bond orders, a transition increases $b_{ij}$ by $\delta$, while its reverse decreases it: 

\begin{equation}
    \label{eq:valence}
    t_{ij}^{+\delta}: \delta S_i + \delta S_j \rightarrow \delta B_{ij}, \quad t_{ij}^{-\delta}: \delta B_{ij} \rightarrow \delta S_i + \delta S_j.
\end{equation}

The $P$-invariants are spanned by the $|V|$ valence budgets $x^{(i)\top}m=\sum_j b_{ij}+s_i$. While the net conserves heavy atoms and valence budgets, it does not conserve hydrogen or charge, which decoding reads back from the free valence Appendix~\ref{sec:implementation_details}.

\paragraph{Atom mapping by the minimum firing vector.} An atom map $\pi$ seats the product marking on the precursor places, while the state equation $P_\pi m_B-m_A=C\sigma(\pi)$ fixes the firing vector on every bond place with a seated end (Figure~\ref{fig:firingvector}). We map by the cheapest firing vector, based on the principle of minimum chemical distance~\citep{Jochum1980}, under a lexicographic cost without continuous parameters. The first level counts the bonds that are formed and broken, with a hydrogen on carbon as a bond of its own, and a hydrogen on a heteroatom, which exchanges with the medium, not counted. The second level then counts the changes of bond order on kept bonds and of formal charge. A branch and bound over the seatings that is bounded by linear assignments on the places of the net then finds the minimum and proves it optimal for $100.0$ \% of the Golden set within 60 s, in a median time of $0.02$ s per reaction. Mappings that tie on both levels are ranked by a third level of three counts, chosen on 200 reactions from the Golden set (Appendix~\ref{sec:minimum_firing_vector}). The mapper runs once per reaction before training and its firing vectors feed the token game and the second classifier.

\paragraph{Forward prediction by the token game.} The product is the marking that an enabled firing sequence reaches from $m_A$. NPF plays the net one bond change at a time (Figure~\ref{fig:tokengame}). From the current marking $m$, the next event is:

\begin{equation}
    \label{eq:token_game}
    \operatorname{Pr}(t \mid m) = \frac{\lambda_{\theta,t}(m) [t \text{ enabled at } m]}{\lambda_{\theta,\mathrm{STOP}}(m) + \sum_{t'} \lambda_{\theta,t'}(m) [t' \text{ enabled at } m]},
\end{equation}

the embedded jump chain of the stochastic net in Section~\ref{sec:preliminaries}, its sequence of markings without the waiting times, where STOP ends the game. A transition is enabled if allowed by the valence rule with half a token of tolerance on slack places (aromaticity), and each bond place is re-typed at most once. The rate law, which is a message-passing network with attention over the whole marking (Appendix~\ref{sec:implementation_details}), ranks bond changes and is fitted to products that carry no times, so it is not a kinetic rate law. The targets are the firing vectors $\sigma$ of the cheapest mappings that decode to the product, a set $\Sigma^*$, with $|\Sigma^*|\leq8$. Because the loss sums over the whole set, the model never has to choose among equally cheap mappings. Furthermore, no order is needed as count-equivalent sequences reach the same marking. In training, we drop the model at a random point along a correct path, and reward it for putting probability on any correct next step. Beam search then keys partial predictions by firing vector, thereby merging count-equivalent sequences and adding their probabilities (Figure~\ref{fig:tokengame}b). The resulting final marking still holds every precursor atom and is decoded into molecules by conservation rules, so by-products omitted in the input data set, come with the prediction. Note that the order of firings is valence bookkeeping and not a mechanism. Products carry no stereochemistry.

\paragraph{The electron net} For mechanisms, we introduce a net whose tokens are electrons. Every atom has an atom place holding its non-bonding electrons, and every pair of atoms has a bond place holding the electrons they share, two for a single bond. A transition moves one electron, a fish-hook arrow, or a pair of electrons, a curly arrow, from an atom place into a bond place, or removes one from a bond place into an atom place. The total number of electrons is a $P$-invariant and no transition adds an atom, so, each step conserves atoms, protons, and electrons by construction. 

In this case, the enabling rule is the octet rule, by which an electron may only move onto an atom with shell room available. The shell capacities and the possible formal charges are read per element from the training steps (Appendix~\ref{sec:implementation_details}), and the guarantees hold relative to them. The STOP is enabled only when every formal charge is whole and within its window and every bond place holds a whole number of pairs, so every end marking is a Lewis structure. Restricting the transitions to pairs gives the arrow net as a subnet whose firings are the curly arrows of arrow pushing, and cannot express steps that move single electrons. The rate law and the token game are the same as in the forward model, a step fires at most 16 arrows and never undoes one, and the products are the end marking. Molecules enter both nets as Kekulé structures (Appendix~\ref{sec:implementation_details}).

\section{Experiments and Discussion}
\label{sec:experiments}

We test each task on standard benchmarks (Tables~\ref{tab:mapping} to~\ref{tab:mechanism_pathways}), described in Appendix~\ref{sec:implementation_details}. For every model, we report the mean and sample standard deviation over three seeds unless stated otherwise, and baselines as reported by their authors on the same splits.


\subsection{Reactions}

\paragraph{Atom mapping.} On the Golden set, scored as by \citet{Lin2021} on the $1\,760$ reactions with a curated map, the minimum firing vector reproduces the curated map, up to the symmetry of the molecules, for $88.81$~\%, and RXNMapper, a transformer pretrained on 2.8M reactions for $85.57$~\% ($p<0.001$), and $88.53$ against $85.58$~\% on the $1\,560$ reactions not used to choose the cost (see Appendix~\ref{sec:minimum_firing_vector}). On SynRXN (Table~\ref{tab:mapping}), which compares reaction centres, our mapper is ahead of RXNMapper on Golden ($p=0.003$) and its subset NatComm ($p=0.01$). It is level with the graph-based learned mappers by \citet{Chen2024Precise} and \citet{Nugmanov2022} on Golden ($p\geq0.51$), non-significantly behind them on NatComm. It is behind all three learned mappers on USPTO-3k ($p<0.001$). SynRXN Golden and NatComm contain 121 and 24 of the 200 reactions used to choose the cost, and without them our mapper scores $89.37$ and $90.36$~\%, still ahead of RXNMapper ($p=0.006$ and $p=0.03$). On all three organic sets, our mapper is ahead of the rule-based RDT by \citet{Rahman2016}. On Recon3D it is ahead of the learned mappers, by at least $7.1$ points ($p<0.001$) and RDT by 2.9 ($p=0.02$). On $E.~coli$, RDT is ahead, although not significantly ($p=0.06$), and we match RXNMapper. To break the tie on \textit{E.~coli}, we evaluated both on EnzymeMap \citep{Heid2023EnzymeMap}, where we establish a lead over RXNMapper with $88.70$ against $77.92$~\% Table~\ref{tab:enzymemap_mapping}.



\begin{table}[h]
\centering
\footnotesize
\setlength{\tabcolsep}{3pt}
\setlength{\abovecaptionskip}{2pt}
\setlength{\belowcaptionskip}{2pt}
\renewcommand{\arraystretch}{0.95}
\caption{Atom mapping on SynRXN~\citep{Phan2026}. Best in \textbf{bold}, second \underline{underlined}. Statistical tests and detailed numbers can be found in Table~\ref{tab:mcnemar}.}
\label{tab:mapping}
\label{tab:mapping_enzyme} 
\begin{tabular}{@{}llrrrrr@{}}
\toprule
method & learned & Golden & NatComm & USPTO-3k & Recon3D & E.\ coli \\
\midrule
RDT & rules & 82.54 & 84.11 & 90.87 & \underline{54.97} & \textbf{78.02} \\
RXNMapper & yes & 87.43 & 87.58 & 93.53 & 48.69 & 72.53 \\
LocalMapper & yes & 89.08 & \underline{92.67} & \textbf{97.77} & 50.79 & 69.96 \\
GraphormerMapper & yes & \underline{89.59} & \textbf{92.87} & \underline{95.10} & 34.82 & 42.12 \\
NPF, minimum firing vector & no & \textbf{89.70} & 90.63 & 92.13 & \textbf{57.85} & \underline{72.89} \\
\bottomrule
\end{tabular}
\end{table}

\paragraph{Reaction classification.}On Schneider50k, the classifier that reads the firing vector of the mapper reaches $98.80\pm0.04$~\% accuracy (Table~\ref{tab:classification}), ahead of the generic network on the same firing vector ($98.64\pm0.09$~\%, $p=0.01$) and of DRFP with a perceptron ($95.52\pm0.04$~\%). Reading the
two sides of the reaction without any map at test time, the state-equation readout reaches $97.68\pm0.09$~\%, level with the generic readout ($97.74\pm0.01$~\%, $p=0.21$), and with all labels the cancellation of Proposition~\ref{prop:cancellation} does not change accuracy, while the firing vector adds $1.1$ points, so the map of the net carries information that the two sides alone do not expose. As rules on the reaction centre assigned the classes of Schneider50k~\citep{Schneider2015}, the EC numbers are the harder test.

\begin{table}[h]
\centering
\footnotesize
\setlength{\tabcolsep}{3pt}
\setlength{\abovecaptionskip}{2pt}
\setlength{\belowcaptionskip}{2pt}
\renewcommand{\arraystretch}{0.95}
\newcommand{\mpm}[2]{$#1 \pm #2$}
\newcommand{\bmpm}[2]{{\boldmath$#1 \pm #2$}}
\newcommand{\smpm}[2]{\begin{tabular}[t]{@{}r@{}}$#1$\\[-1pt]{\scriptsize$\pm #2$}\end{tabular}}
\newcommand{\sbmpm}[2]{\begin{tabular}[t]{@{}r@{}}{\boldmath$#1$}\\[-1pt]{\scriptsize$\pm #2$}\end{tabular}}
\newcommand{\sumpm}[2]{\begin{tabular}[t]{@{}r@{}}\underline{$#1$}\\[-1pt]{\scriptsize$\pm #2$}\end{tabular}}
\caption{Reaction classification, accuracy and macro-F1 in \%, $\pm$: sample standard deviation over $n$ seeds, three for NPF on ECREACT and CARE. ECREACT, split of \citet{Liu2026Enzyformer}, 4907 test reactions, baselines the mean over three runs. CARE task~2, easy split, 393 test reactions, baselines from \citet{Yang2024CARE}. Best in \textbf{bold}, second \underline{underlined}.}
\label{tab:classification}
\resizebox{\linewidth}{!}{%
\begin{tabular}{@{}lrrrr@{\hspace{2em}}lrrrr@{}}
\toprule
\multicolumn{10}{@{}l@{}}{%
\begin{tabular}{@{}lrrr@{}}
\textit{Schneider50k (50 classes)} & $n$ & accuracy & macro-F1 \\
\midrule
DRFP + MLP          & 3 & \mpm{95.52}{0.04} & \mpm{95.47}{0.04} \\
PGNN, readout       & 3 & \mpm{97.67}{0.07} & \mpm{97.65}{0.07} \\
PGNN, firing vector & 5 & \underline{\mpm{98.64}{0.09}} & \underline{\mpm{98.63}{0.09}} \\
NPF, readout        & 5 & \mpm{97.68}{0.09} & \mpm{97.67}{0.10} \\
NPF, firing vector  & 5 & \bmpm{98.80}{0.04} & \bmpm{98.80}{0.04} \\
\end{tabular}} \\
\midrule[\heavyrulewidth]
\textit{ECREACT, Enzyformer split} & EC4 & EC3 & EC2 & EC1 & \textit{CARE task 2, easy} & EC4 & EC3 & EC2 & EC1 \\
\cmidrule(r){1-5}\cmidrule(l){6-10}
BEC-Pred              & -- & 76.9 & 81.2 & 85.8 & CLIPZyme        & 12.2 & 39.9 & 61.8 & 79.9 \\
DRFP + CLM            & -- & 81.7 & 85.6 & 88.3 & DRFP similarity & 59.3 & 77.1 & 85.2 & 90.6 \\
Synthcoder-DistilBERT & -- & 84.4 & 86.7 & \underline{90.9} & CREEP & 39.4 & 66.4 & 79.9 & 92.9 \\
Enzyformer            & -- & \underline{84.6} & \underline{88.2} & \underline{90.9} & NPF, no map & \sumpm{67.85}{0.39} & \smpm{82.27}{0.29} & \smpm{89.74}{0.39} & \smpm{95.76}{0.59} \\
NPF                   & \sbmpm{75.75}{0.18} & \sbmpm{90.20}{0.15} & \sbmpm{92.45}{0.23} & \sbmpm{95.40}{0.23} & NPF + mapper & \sbmpm{70.23}{1.59} & \sumpm{84.31}{0.64} & \sumpm{92.03}{0.78} & \sbmpm{96.78}{1.03} \\
\cmidrule(l){6-10}
& & & & & CREEP + text & 60.3 & \textbf{89.3} & \textbf{93.9} & \underline{96.7} \\
\bottomrule
\end{tabular}}
\end{table}

On ECREACT \citep{Probst2022biocatalysed}, in the split of \citet{Liu2026Enzyformer}, the state-equation readout predicts the EC number to the third level for $90.20\pm0.15$~\% of the test reactions, $5.6$ points ahead of the best retrained method, Enzyformer ($84.6$~\%), and leads at EC levels 2 and 1 by $4.3$ and $4.5$ points. At the full EC number, which no baseline reports, it reaches $75.75\pm0.18$~\%, where the $224$ test reactions whose EC number never occurs in training cap any method at $95.4$~\%. On CARE, most EC numbers have four or fewer training reactions, here the classifier reaches $70.23\pm1.59$~\% at the
full EC number with the maps of the mapper and $67.85\pm0.39$~\% without maps, against $59.3$~\% for DRFP similarity, the best method that reads only the reaction. CREEP with a text description of the reaction reaches $60.3$~\% at the full EC number and leads at EC levels 3 and 2 ($89.3$ and $93.9$ against $84.31$ and $92.03$~\% for NPF), while the structure of the reaction alone recovers the full EC number better than CREEP.

\paragraph{Forward prediction.} On USPTO-480k \citep{Jin2017}, NPF trained on targets of the net predicts the recorded product first for 87.7~\% of the $40\,000$ test reactions under the rule of Appendix~\ref{sec:implementation_details} (Table~\ref{tab:forward}), matching MAELLE (87.2~\%), which learns the movement of the electrons, and above the graph-edit models MEGAN (86.3~\%) and GTPN (83.2~\%). Sequence models such as Graph2SMILES (90.3~\%) and Molecular Transformer (88.6~\%), as well as the electron-edit model NERF (90.7~\%), remain ahead at top-1, and at top-5 NPF (93.7~\%) is level with with NERF (93.7~\%) and MAELLE (93.9~\%). Training NPF on the minimum firing vectors of the mapper instead of the supplied atom maps raises top-1 from 86.8 to 87.71~\%, as on 10~\% of the data over three seeds (Appendix~\ref{sec:app_forward_prediction}), so the model does not need external atom maps. With 1~\% of the training data, the token game predicts the major product first for $67.44$~\% of the test reactions (Table~\ref{tab:subsets}). Without the enabling rule, accuracy is unchanged (87.6~\%), but 5 out of the $40\,000$ predictions break the valence rule (Table~\ref{tab:forward_validity}), which the enabling rule prevents for every value of the weights. Trained on USPTO-480K, the network thus learns the valence rule almost perfectly by itself, but only the net guarantees it.

\begin{table}[ht]
\centering
\footnotesize
\setlength{\tabcolsep}{3pt}
\setlength{\abovecaptionskip}{2pt}
\setlength{\belowcaptionskip}{2pt}
\renewcommand{\arraystretch}{0.95}
\caption{Top-$k$ accuracy comparison on USPTO-480K. Entries taken from the original papers, as collected by
\citet{Nguyen2026}, are marked with $^\dagger$. NPF is scored as in Appendix~\ref{sec:implementation_details}, one seed. With the enabling rule, every NPF prediction stays within the valence budgets of its atoms (Table~\ref{tab:forward_validity}).}
\label{tab:forward}
\begin{tabular}{@{}llcccc@{}}
\toprule
method & prediction modality & top-1 & top-3 & top-5 \\
\midrule
Molecular Transformer$^\dagger$~\citep{Schwaller2019} & SMILES         & 0.886 & 0.935 & 0.942 \\
Graph2SMILES$^\dagger$~\citep{Tu2022}                 & SMILES         & 0.903 & \textbf{0.940} & \textbf{0.948} \\
MEGAN$^\dagger$~\citep{Sacha2021}                     & Graph edits    & 0.863 & 0.924 & 0.940 \\
GTPN$^\dagger$~\citep{do2019}                         & Graph edits    & 0.832 & 0.860 & 0.865 \\
NERF$^\dagger$~\citep{Bi2021}                         & Electron edits & \textbf{0.907} & 0.933 & 0.937 \\
MAELLE~\citep{Nguyen2026}                             & Electron flow  & 0.872 & 0.930 & 0.939 \\
\midrule
NPF, recorded maps (ours)                             & Token game     & 0.868 & 0.921 & 0.934\\
NPF, targets of the net (ours)                        & Token game     & \underline{0.877} & {0.927} & {0.937}\\
NPF, targets of the net, no enabling rule (ours)      & Token game     & {0.876} & \underline{0.927} & \underline{0.938}\\
\bottomrule
\end{tabular}
\end{table}

\paragraph{Reaction mechanisms.} We use the elementary steps of FlowER~\citep{Joung2025} in their published split (Appendix~\ref{sec:implementation_details}). Given the reactants of a step, a model predicts its products, and a prediction is correct if its canonical SMILES without atom maps and stereochemistry equals that of the recorded products. We report the top-k step accuracy and the top-k pathway accuracy of \citet{Joung2025}, computed as in their evaluation script.

Every top-1 prediction of both nets is a valid molecule (Table~\ref{tab:mechanism_validity}), without a filter, while FlowER discards its invalid samples, about 5~\%, before ranking, and 70 to 79~\% of the top-1 predictions of the sequence models are valid. Every prediction of both nets conserves atoms, protons and electrons, which 17 to 33~\% of the predictions of the sequence models do. The electron net predicts the recorded step first for 90.47~\% of test steps (Table~\ref{tab:mechanism_steps}), ahead of every baseline, the best being FlowER-large (89.74~\%), and within 0.8 points of the 91.27~\% upper limit. The electron net leads at every k with 8.2M parameters, against 16M for FlowER-large. On whole reactions it recovers the pathway at top-1 for $94.49\pm0.36$~\% (Table~\ref{tab:mechanism_pathways}), against 92.51~\% for Graph2SMILES and 92.50~\% for FlowER-large, the best baselines, and 88.97~\% for FlowER.


The arrow net, which moves only pairs, scores $88.85$~\% top-1, and its difference to the electron net comes from the $1.79$~\% of test steps that move single electrons (Appendix~\ref{sec:app_reaction_mechanisms}).

\begin{table}[t]
\centering
\footnotesize
\setlength{\tabcolsep}{3pt}
\caption{Top-$k$ accuracy (\%) on the FlowER test set for individual steps ($162\,002$) and full pathways ($28\,049$ reactions). A pathway counts as correct if some route to a recorded end product has every step in the top $k$. Steps from PMechDB and RMechDB are excluded. The upper limit reflects steps with more than one valid outcome. NPF ranks by beam probability. Baselines rank by sample frequency and come from Fig.~2 of \citet{Joung2025}. Best in \textbf{bold}, second \underline{underlined}.}
\label{tab:mechanism_steps}
\label{tab:mechanism_pathways}
\resizebox{\linewidth}{!}{%
\begin{tabular}{@{}lrrrrrrrrr@{}}
\toprule
& & \multicolumn{4}{c}{step, top-$k$} & \multicolumn{4}{c}{pathway, top-$k$} \\
\cmidrule(lr){3-6}\cmidrule(l){7-10}
method & params & 1 & 2 & 3 & 5 & 1 & 2 & 3 & 5 \\
\midrule
Molecular Transformer & 12M  & 88.75 & 96.62 & 98.42 & 98.92 & 88.31 & 95.64 & 97.02 & 97.59 \\
Graph2SMILES          & 18M  & 89.09 & 96.75 & 98.08 & 98.66 & \underline{92.51} & 95.94 & 97.03 & 97.76 \\
Graph2SMILES+H        & 18M  & 87.39 & 95.31 & 96.87 & 97.67 & 89.22 & 93.70 & 95.10 & 96.19 \\
FlowER                & 7M   & 88.48 & 96.42 & 97.96 & 98.60 & 88.97 & 94.75 & 96.48 & 97.39 \\
FlowER-large          & 16M  & \underline{89.74} & \underline{97.40} & \underline{98.66} & \underline{99.13}
                             & 92.50 & \underline{96.89} & \underline{98.15} & \underline{98.61} \\
\midrule
NPF, arrow net        & 7.3M & $88.85 \pm 0.04$ & $96.13 \pm 0.06$ & $97.33 \pm 0.07$ & $97.71 \pm 0.14$
                             & $90.54 \pm 0.05$ & $94.20 \pm 0.09$ & $94.88 \pm 0.08$ & $95.19 \pm 0.13$ \\
NPF, electron net     & 8.2M & $\mathbf{90.47} \pm 0.09$ & $\mathbf{97.85} \pm 0.08$ & $\mathbf{99.08} \pm 0.08$ & $\mathbf{99.46} \pm 0.14$
                             & $\mathbf{94.49} \pm 0.36$ & $\mathbf{98.46} \pm 0.21$ & $\mathbf{99.21} \pm 0.13$ & $\mathbf{99.52} \pm 0.16$ \\
\midrule
upper limit           &      & 91.27 & 98.89 & 99.79 & 99.99 & -- & -- & -- & -- \\
\bottomrule
\end{tabular}
}
\end{table}

\FloatBarrier
\subsection{What does each Petri component contribute?}
\label{sec:load_bearing}

We evaluated, on Schneider50k for forward prediction and classification, the proposed Petri net architecture against counterparts of equal size but without each component we introduced (Appendix~\ref{sec:app_load_bearing}). Playing the net one bond change at a time, as the stochastic net from Section~\ref{sec:preliminaries} fires, raises the greedy top-1 from $79.72$ to $93.34$~\%. Enabling, introduced in Section~\ref{sec:enforcing}, makes every prediction valence-valid for every value of the weights (Proposition~\ref{prop:validity}) at no cost in accuracy ($p=0.22$), whereas the generic one-shot model breaks a budget in $2.79$~\%, and the token game without the mask in $0.02$~\%, and in $0.16$~\% after a shift to $5\,000$ USPTO-480K test reactions.  The pure state equation readout ($w\equiv1$, Section~\ref{sec:state_equation_readout}) is invariant to an inserted far methyl group. When trained on subsamples of $1\,000$ and 250 reactions, the pure readout is ahead of its generic counterpart, the firing vector (Section~\ref{sec:preliminaries}) only on 250. A D-MPNN on the condensed graph of reaction by ~\citet{Heid2021machine} with the same maps is behind the firing vector except on larger molecules, where the two are level  (Appendix~\ref{sec:app_reaction_classification}).

\section{Conclusion}
In this work, we asked what a neural network looks like when its outputs obey the rules of a Petri net for every value of its weights. The theory answers that it has to keep the firing form and the enabling rule, while it can only learn the rate law or a readout of the firing vector. On the valence net, atom mapping, reaction classification, and forward prediction become three tasks on a firing vector. The minimum firing vector maps atoms without learned weights. It is competitive across a range of reaction data sources, and its tied optima replace learned atom maps as the training targets of the forward model. On USPTO-480K, NPF trained on these targets is level with or ahead of the electron-flow model MAELLE and the graph-edit models, and ahead of the NPF trained on the supplied maps. The state-equation readout predicts EC numbers ahead of published methods on ECREACT and at the full EC number on CARE. On the electron net, the token game predicts the elementary steps of FlowER ahead of published baselines. The enabling mask makes validity a guarantee at no cost in accuracy, while the readout and the firing vector gain on small training sets but no on distribution shifts in molecule size.

\subsection*{Acknowledgements}
We would like to thank Dr. Alma Ademovic Tahirovic for sharing her insights and the useful and enjoyable discussions on neural Petri nets as applied in her own work.
\subsection*{Reproducibility statement}
We have taken care to provide training data, scripts, and logs together with the source code to assist reproducibility. The source code, data, scripts, and logs are available at \url{https://github.com/daenuprobst/npf}. Proofs are in Appendix~\ref{sec:proofs} and Appendix~\ref{sec:implementation_details} describes the data, splits, hyperparameters, and metrics.



\bibliographystyle{plainnat}
\bibliography{references}

\newpage
\appendix
\section{Proofs}
\label{sec:proofs}

Nets, markings, enabling, the incidence matrix $C$, and the attributes $e_p$ and $a_t$ are those introduced in Section~\ref{sec:preliminaries}. For the consumers of $p$, we write $p^{\bullet} = \{ t : \mathrm{Pre}(p,t) > 0 \}. \text{ If } {}^{\bullet}t \cap t^{\bullet} = \emptyset$ for all $t$, the net is pure, meaning no transition takes from and gives to the same place. $\mathbf{1}_p$ is the unit vector of place $p$. On the valence net of the precursor atoms $V$, we write $b_{ij}=m_B$ for the bond order and $s_i=m_{S_i}$ for free valence. Transitions are $t_{ij}^{+\delta}: \delta S_i + \delta S_j \rightarrow \delta B_{ij}$ and their reverse is $t_{ij}^{-\delta}$, for every pair $i<j$ of atoms and every $\delta \in \left\{ \frac{1}{2}, 1, \frac{3}{2}, 2, 3 \right\}$. All columns of one pair are multiples of $\mathbf{1}_{B_{ij}}-\mathbf{1}_{S_i}-\mathbf{1}_{S_j}$, so $C$ has rank $\binom{|V|}{2}$ and $\ker C^\top$ is spanned by the $|V|$ valence budgets $x^{(i)} = \mathbf{1}_{S_i} + \sum_j \mathbf{1}_{B_{ij}}$, with ${x^{(i)}}^\top m = s_i + \sum_j b_{ij}$. Arc weights and markings are all multiples of $\frac{1}{2}$, so doubling them gives a discrete net. An atom map $\pi$ seats the product marking on the precursor places as $P_\pi m_B$ (see Section~\ref{sec:preliminaries}). The implementation masks transitions that are not enabled with a log-rate of $-10^4$ instead of $-\infty$, realising Proposition~\ref{prop:validity} when the log-rate of STOP exceeds $-10^3$. 

The inverse problem uses the two orthogonal decompositions
\begin{equation}
\label{eq:decomp}
    \mathbb{R}^T=\mathrm{im} C^\top\oplus\ker C,\qquad \mathbb{R}^P=\mathrm{im}C\oplus\ker C^\top,
\end{equation}

 and the pseudoinverse $C^+$, where $C^+C$ and $I-C^+C$ project orthogonally onto $\mathrm{im}C^\top$ and $\ker C$. Therefore, every $\sigma$ splits into one part in $\mathrm{im}C^\top$, which is determined by the observed change as $C^+(m-m_0)$, and another part containing cycles in $\ker C$. $\mathrm{im}$, the image, is all that can be produced by the matrix, while $\ker$, the kernel, is all that it maps to zero. In chemistry terms, $\mathrm{im}C$ holds the changes of bonds and free valence that bond changes can cause, $\ker C^\top$ the valence budgets, $\ker C$ the bond changes that cancel each other out, and $\mathrm{im}C^\top$ the part of the bond changes determined by the products.

\subsection{Conservation is equivalent to the firing form}

\begin{proposition}[The conservation is equivalent to the firing form] 
\label{prop:conservation}
Assume the $P$-invariants span $\ker C^\top$. An update $m\mapsto m+u(m)$ satisfies $x^\top(m+u(m))=x^\top m$ for all markings $m$ and all $P$-invariants $x$ iff $u(m)\in\mathrm{im}C$ for all $m$, i.e. $u=C\circ v$ for some $v:\mathbb{R}^P_{\geq0}\rightarrow\mathbb{R}^T$.
\end{proposition}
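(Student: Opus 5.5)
The argument is pointwise linear algebra. It rests on the orthogonal decomposition $\mathbb{R}^P=\mathrm{im}C\oplus\ker C^\top$ from \eqref{eq:decomp}, that is, on $\mathrm{im}C=(\ker C^\top)^\perp$. Fix an arbitrary marking $m\in\mathbb{R}^P_{\geq0}$. The conservation condition $x^\top(m+u(m))=x^\top m$ for all $P$-invariants $x$ is equivalent to $x^\top u(m)=0$ for all such $x$. We prove the two directions separately and then assemble $v$.

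For sufficiency, suppose $u(m)=Cv(m)$. Then for every $P$-invariant $x$ we have $x^\top u(m)=(x^\top C)v(m)=0$, since $x^\top C=0$ by definition. This direction needs no spanning assumption, which matches the remark in Section~\ref{sec:related-work} that hybrid models realise it.

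For necessity, suppose $x^\top u(m)=0$ for every $P$-invariant $x$. By the assumption that the $P$-invariants span $\ker C^\top$, $u(m)$ is orthogonal to all of $\ker C^\top$. Hence $u(m)\in(\ker C^\top)^\perp=\mathrm{im}C$ by the fundamental theorem of linear algebra, or equivalently by the decomposition \eqref{eq:decomp}. Concretely, the component of $u(m)$ in $\ker C^\top$ is its projection $(I-CC^+)u(m)$, and this projection is zero.

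To assemble $v$, we need a function $v$ with $u=C\circ v$. We define it explicitly as $v(m):=C^+u(m)$. Since $CC^+$ is the orthogonal projector onto $\mathrm{im}C$ and $u(m)\in\mathrm{im}C$, we get $Cv(m)=CC^+u(m)=u(m)$. This choice avoids any appeal to choice and inherits measurability or continuity from $u$, which is worth noting for learned updates.

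There is no serious obstacle. The only point needing care is the role of the hypothesis. "All $P$-invariants" means the invariants the model is asked to keep. If these spanned only a proper subspace of $\ker C^\top$, necessity would fail, because $u(m)$ could have a nonzero component in the complementary part of $\ker C^\top$. The proof should therefore state explicitly where the spanning assumption is used. It should also note that $v$ is real-valued and not required to be non-negative, so the proposition characterises the firing form only up to the sign constraint $\sigma\geq0$. For the valence net, the spanning hypothesis holds by the computation of $\ker C^\top$ given in the preamble of Appendix~\ref{sec:proofs}.
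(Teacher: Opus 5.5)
Your proof is correct and follows the paper's argument essentially line for line. Pointwise, conservation means $u(m)\perp\ker C^\top$, so $u(m)\in(\ker C^\top)^\perp=\mathrm{im}C$; then $v(m):=C^+u(m)$ gives $Cv(m)=CC^+u(m)=u(m)$, and sufficiency follows from $x^\top C=0$. Your caveat about a proper subspace only applies under your broader reading of ``the invariants the model is asked to keep'': the paper defines a $P$-invariant as any $x$ with $x^\top C=0$, so the spanning hypothesis holds automatically.
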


\begin{proof}
Fix $m$. The condition $x^\top u(m) = 0$ for all $x \in \ker C^\top$ says $u(m) \in (\ker C^\top)^\perp = \operatorname{im} C$. If $u(m) \in \operatorname{im} C$ for all $m$, put $v(m) := C^+ u(m)$, then $C v(m) = C C^+ u(m) = u(m)$. Conversely, $x^\top C v = 0$ for every $P$-invariant $x$.
\end{proof}

Note that $v$ can have negative entries for reverse firings.

\paragraph{Conservation fails under generic parameters.} A model updates a marking as $m^+_p=m_p+r_\theta(h_p)+c$, with a free output bias $c\in\mathbb{R}$. This is the same that message-passing models with an MLP readout do. Let $x$ be a $P$-invariant where $\sum_p x_p\neq0$. Two parameter vectors differing only in $c$ give values of $x^\top(m^+-m)$ that differ by $(c-c^\prime)\sum_p x_p\neq0$. This means that for every input and every value of the other weights, at most one value of $c$ conserves $x$. Thus, the conserving parameters form a set of Lebesgue measure zero, giving gradient training no reason to find. In short, a loss can drive such a model towards conservation, but nothing in the architecture keeps it there on new inputs.

\subsection{The enabling rule is forced}
\begin{proposition}[The enabling rule is forced.]

\label{prop:enabling_rule}
    $v_t(m)=f\left((m_p,e_p,\mathrm{Pre}(p,t))_{p\in^\bullet t},a_t\right)$ is a rate law is local shared across nets, where $f\geq0$ continuous. If on every pure net and from every $m(0)\in\mathbb{R}^P_{\geq0}$ the equation $\Dot{m}=Cv(m)$ has a solution that stays in $\mathbb{R}^P_{\geq0}$ on some $[0,\epsilon)$, then $f=0$ whenever an input place is empty.
\end{proposition}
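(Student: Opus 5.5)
We argue by contradiction using a tiny test net. Suppose there is an argument $\bigl((m_p,e_p,w_p)_{p\in I},a\bigr)$ with some coordinate $m_{p_0}=0$ but $f>0$ there. Since $f$ is shared across nets, we may build a pure net whose single transition $t$ has exactly this input data. Its input places are $I$ with attributes $e_p$ and arc weights $\mathrm{Pre}(p,t)=w_p>0$, and $a_t=a$. We add one fresh output place $q\notin I$ with $\mathrm{Post}(q,t)=1$, so the net is pure, and set $\mathrm{Post}(p,t)=0$ for $p\in I$. Then $C_{p_0,t}=-w_{p_0}<0$. We start from the marking $m(0)$ that puts the given values $m_p$ on $I$ and $0$ on $q$. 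The ODE for place $p_0$ reads
\[
\dot m_{p_0}=-w_{p_0}\,f\bigl((m_p(\tau),e_p,w_p)_{p\in I},a\bigr).
\]

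The key step is to show that no non-negative solution can exist on any $[0,\epsilon)$. At $\tau=0$ the right-hand side equals $-w_{p_0}f(\cdot)<0$. The field $Cv(m)$ is continuous in $m$ because $f$ is continuous, so any solution is $C^1$, and $\dot m_{p_0}$ is continuous along it. Hence $\dot m_{p_0}(\tau)\le -\tfrac12 w_{p_0}f(\cdot)<0$ on some $[0,\eta)$. Since $m_{p_0}(0)=0$, this gives $m_{p_0}(\tau)<0$ for all $\tau\in(0,\eta)$. So every solution, whichever one is chosen, leaves $\mathbb{R}^P_{\ge0}$ immediately. This contradicts the hypothesis, and therefore $f=0$ whenever an input coordinate is zero.

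The main obstacle is making sure that the "shared across nets" hypothesis really lets us realise an arbitrary argument of $f$ inside a pure net. The attributes $e_p$ and $a_t$, the arc weights and the input cardinality must all be freely choosable. I would state explicitly that $f$ is defined on the inputs of transitions of arbitrary nets, and that place and transition attributes are free labels. Then any tuple with positive weights occurs as the input of some transition of a pure net, namely the one-transition net built above.

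A second subtlety is non-uniqueness: $f$ is only continuous, not Lipschitz, so the ODE may have several solutions. The hypothesis only asserts that some non-negative solution exists. The derivative argument above handles this, because it uses nothing about the solution beyond $m(0)$ and the continuity of the field, so it applies to every solution.
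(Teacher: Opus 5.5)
Your proof is correct and follows the paper's argument: the same one-transition pure net with a fresh output place, where the empty input place has no producer and so its marking must strictly decrease. The paper phrases the last step in integral form, $0\le m_q(\tau)=-\mathrm{Pre}(q,t)\int_0^\tau f\,\mathrm{d}s$, which forces $f=0$ along the solution and hence at $s=0$ by continuity; this is equivalent to your derivative argument.
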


\begin{proof}
    Suppose $f>0$ at an argument with $m_q=0$ for some $q\in^\bullet t$. Then consider a pure net consisting of $t$, input places with these attributes and weights, and one new output place, letting $m(0)$ have these values. By hypothesis, there is a solution with $m(s)\geq0$ on some $[0,\epsilon)$. The place $q$ has no producer, so $0\leq m_q(\tau)=-\mathrm{Pre}(q,t)\int_0^\tau f\mathrm{d}s$ along the solution. As $f\geq0$, the integrand vanishes for most $s$, and, by continuity of $f$ and of the solution for every $s$, especially at $s=0$, there is a contradiction.
\end{proof}

The rate law is the same formula everywhere, which means that if it should fire a transition with an empty input place, we could build a one-transition net that creates this situation. As nothing refills the empty place, it would go negative. However, as negative markings are forbidden, the rate has to be zero when an input place is empty. This is important because this tells us that the enabling rule is not a design choice that we impose, but a consequence of non-negativity. Any learned or handwritten rate law that is local, shared, continuous, and that keeps markings nonnegative has to respect it, while one that violates it will, at some point, produce impossible states. For one firing at a time, as in the jump chain of Section~\ref{sec:preliminaries} and in the token game, the rule is force without locality. On a pure net, a transition $t$ that is not enabled at $m$ has $m_p < \mathrm{Pre}(p,t)$ for some input place $p$, so a rate law that keeps every marking non-negative has to set $\lambda_t(m)=0$. The valence net is pure, and this zero rate is the mask of Section~\ref{sec:enforcing}, applied to $m^\circ$ of Proposition~\ref{prop:validity}.

\subsection{Cancellation and invariance of the readout}
\paragraph{Setting.} The two graphs $G_A=(V_A,x^A,b^A)$ and $G_
B=(V_B,x^B,b^B)$ with node attributes $x$ and bond types $b(\cdot,\cdot)$, where 0 = no bond, and $\pi=V_B\rightarrow V_A$ is injective. The $R$-ball of node (atom) $i$ is the attributed (sub) graph induced on the nodes within distance $R$ of $i$. $\psi_G(i)$ only depends on the isomorphism class of the $R$-ball of $i$ in $G$. Let $D\subseteq V_B$ contain $i$ if $x^B_i\neq x^A_{\pi(i)}$, or $b^B(i,k)\neq b^A(\pi(i),\pi(k))$ for some $k\in V_B$, or $b^A(\pi(i),u)\neq0$ for some $u\notin\pi(V_B)$, and let $D_R=\{i:\mathrm{dist}_B(i,D)\leq R\}$.

In chemistry terms, we have two molecules $A$ and $B$ that are atom-to-atom mapped by $\pi$. The featuriser $\psi_G(i)$ describes the atom $i$ by its chemical environment within $R$ bonds. $D$ is the set of atoms in $B$ where elements or bonding differs from their counterparts in $A$ and $D_R$ adds every atom within a radius of $R$. This is conceptually similar to the DRFP fingerprint.

\begin{proposition}[Cancellation and invariance]
\label{prop:cancellation}
    Let $r=\sum_{i\in V_B}\psi_B(i)-\sum_{j\in V_A}\psi_A(j)$. Then $r=\sum_{i\in D_R}[\psi_B(i)-\psi_A(\pi(i))]-\sum_{j\notin\pi(V_B)}\psi_A(j)$. Specifically, attaching the same attributed subgraph by one edge to $i$ and $\pi(i)$, where $i$ is more than $2R$ away from $D$ and the attributes of $i$ and $\pi(i)$ change identically, leaves $r$ unchanged.
\end{proposition}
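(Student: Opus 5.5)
The plan is to split both sums along the injection $\pi$ and then show that every mapped atom outside $D_R$ contributes zero. Since $\pi:V_B\to V_A$ is injective, $V_A$ is the disjoint union of $\pi(V_B)$ and its complement. This gives
\[
r=\sum_{i\in V_B}\bigl[\psi_B(i)-\psi_A(\pi(i))\bigr]-\sum_{j\notin\pi(V_B)}\psi_A(j).
\]
The first identity then follows once we know that each summand $\psi_B(i)-\psi_A(\pi(i))$ with $i\notin D_R$ vanishes. Because $\psi_G(i)$ depends only on the isomorphism class of the $R$-ball, it suffices to show that $\pi$ restricts to an attributed isomorphism from the $R$-ball of $i$ in $G_B$ onto the $R$-ball of $\pi(i)$ in $G_A$.

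That isomorphism is the main step, and I would prove it by induction on distance. Fix $i$ with $\mathrm{dist}_B(i,D)>R$.

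First, take a path $i=k_0,k_1,\dots,k_\ell$ in $G_B$ with $\ell\le R$. Every $k_s$ lies outside $D$, so its bonds agree with those of its image: $b^B(k_s,k_{s+1})=b^A(\pi(k_s),\pi(k_{s+1}))\neq0$. Hence $\pi$ maps this path onto a path in $G_A$, and $\mathrm{dist}_A(\pi(i),\pi(k))\le\mathrm{dist}_B(i,k)$.

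Conversely, take a path $\pi(i)=u_0,u_1,\dots,u_\ell$ in $G_A$ with $\ell\le R$. By induction, each $u_s$ with $s<\ell$ equals $\pi(k_s)$ for some $k_s\notin D$ at $B$-distance at most $s<R$ from $i$. For that $k_s$, the third clause in the definition of $D$ forbids a bond from $\pi(k_s)$ to an unmapped atom, so $u_{s+1}\in\pi(V_B)$. The second clause then transfers the bond back to $G_B$. The new preimage $k_{s+1}$ is at $B$-distance at most $s+1\le R$, hence still outside $D$, and the induction continues.

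So $\pi$ is a distance-preserving bijection between the two $R$-balls. Node attributes agree because every ball node lies outside $D$. Bond types between any two ball nodes agree by the second clause, which also covers bonds between two boundary nodes of the balls. This is exactly where the hypotheses are used, and I expect the bookkeeping here, in particular the induced edges among distance-$R$ nodes and the exclusion of unmapped neighbours, to be the only delicate point.

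For the invariance statement, attach a subgraph $H$ by one edge to $i$ in $G_B$ and a copy of it to $\pi(i)$ in $G_A$. Extend $\pi$ to map $H$ onto its copy; injectivity is preserved, and the unmapped set $V_A\setminus\pi(V_B)$ is unchanged. The clause on attributes of $i$ and $\pi(i)$ changing identically ensures the attached atoms and $i$ do not enter the new set $D'$, so $D'=D$. Applying the first identity to both the old and new reactions, $r$ changes only through terms of $D_R$, which may be re-indexed, and of unmapped atoms. I would show that no term in either group changes:
\begin{itemize}
\item The unmapped atoms are all in $G_A$ and lie within distance $R$ of $\pi(D)$ or in balls unaffected by $H$. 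It suffices that their $R$-balls avoid $\pi(i)$, which holds because any such $j$ adjacent to a mapped atom $\pi(k)$ forces $k\in D$.
\item For $k\in D_R$, the triangle inequality gives $\mathrm{dist}(k,i)\ge\mathrm{dist}(i,D)-R>R$, so the $R$-ball of $k$ does not reach $i$ or $H$ in either graph; the same holds in $G_A$ by the distance preservation above.
\item The new atoms of $H$, and $i$ itself, are at distance greater than $2R\ge R$ from $D$, so they lie outside $D_R$ and contribute zero.
\end{itemize}
Hence $r$ is unchanged. For bond terms I would run the same argument with $R=K+1$, treating a bond's descriptor as a function of the balls of its endpoints.
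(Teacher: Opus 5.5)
Your proof is correct and follows the paper's route. You split the sum along the injection, prove that $\pi$ is an isomorphism of $R$-balls by induction on distance from the root (the paper's Lemma~\ref{lem:ball}), and, for invariance, check that $D$, $D_R$ and the unmatched atoms are unchanged and that no remaining term's $R$-ball reaches the attachment site, using the triangle inequality and the lemma applied at the attachment site. Your first bullet on unmapped atoms is stated loosely, but it closes at once: your converse induction already shows that every atom within $A$-distance $R$ of $\pi(i)$ is mapped, and that is exactly the step the paper uses.
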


\begin{lemma}
\label{lem:ball}
    If $\mathrm{dist}_B(i,D)>R$, then $\pi$ restricts to an isomorphism of the $R$-ball of $i$ in $G_B$ onto the $R$-ball of $\pi(i)$ in $G_A$. Specifically, it preserves distances to the root, and $\psi_B(i)=\psi_A(\pi(i))$.
\end{lemma}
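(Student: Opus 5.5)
We work directly from the definition of $D$. Fix $i\in V_B$ with $\mathrm{dist}_B(i,D)>R$. Then every node $k$ in the $R$-ball of $i$ lies at distance at most $R$ from $i$, so $k\notin D$ by the triangle inequality. Membership $k\notin D$ gives three facts. First, $x^B_k=x^A_{\pi(k)}$. Second, $b^B(k,l)=b^A(\pi(k),\pi(l))$ for every $l\in V_B$. Third, $\pi(k)$ has no bond to any node outside $\pi(V_B)$.

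The plan is to show that $\pi$ maps the $R$-ball $\mathcal{B}_B(i)$ bijectively onto $\mathcal{B}_A(\pi(i))$ while preserving attributes, bonds and distances to the root. I will prove by induction on $d=0,\dots,R$ that $\pi$ maps the sphere $\{k:\mathrm{dist}_B(i,k)=d\}$ bijectively onto the sphere $\{u:\mathrm{dist}_A(\pi(i),u)=d\}$.

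\textbf{Base case.} For $d=0$ the statement is trivial.

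\textbf{Forward direction.} Take $k$ at distance $d+1\le R$, with a neighbour $k'$ at distance $d$. Since $k'\notin D$, the bond $b^A(\pi(k'),\pi(k))=b^B(k',k)\neq0$, so $\mathrm{dist}_A(\pi(i),\pi(k))\le d+1$.

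\textbf{Backward direction.} Take $u\in V_A$ at distance $d+1$ from $\pi(i)$, with a neighbour $u'$ at distance $d$. By induction, $u'=\pi(k')$ for some $k'$ at distance $d\le R$, so $k'\notin D$. The third clause of $D$ then forces $u\in\pi(V_B)$, say $u=\pi(k)$. The second clause gives $b^B(k',k)=b^A(u',u)\neq0$, so $\mathrm{dist}_B(i,k)\le d+1$.

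\textbf{Conclusion of the induction.} The two inequalities, combined with injectivity of $\pi$ and the induction hypothesis on smaller spheres, give equality of distances. Hence $\pi$ restricts to a distance-preserving bijection of the balls. Attributes are preserved by the first clause of $D$. Bond types between any two ball nodes are preserved by the second clause, and this also holds for edges between nodes at distance exactly $R$, because both endpoints are outside $D$. So the restriction is an isomorphism of the induced attributed subgraphs.

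Since $\psi$ depends only on the isomorphism class of the $R$-ball, we get $\psi_B(i)=\psi_A(\pi(i))$.

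The main obstacle is the backward direction, namely showing that no node of $A$ near $\pi(i)$ falls outside $\pi(V_B)$. This is exactly where the third clause of $D$ (no bond to an unmapped atom) is needed, and the induction must be ordered so that $u'$ is already known to lie in the image of a non-$D$ node before that clause is applied.
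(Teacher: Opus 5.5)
Your proof is correct and follows the paper's argument. Every node of the $R$-ball lies outside $D$, so the neighbours of its image in $G_A$ are exactly the images of its neighbours, with equal bond types and none unmatched; an induction over the distance spheres then gives the distance-preserving isomorphism of the balls, and your version only makes explicit the two inclusions and the use of injectivity that the paper's one-line induction leaves implicit.
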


\begin{proof}
    No node $u$ of $R$-ball $U$ of $i$ lies in $D$, this means that the neighbours of $\pi(u)$ in $G_A$ are exactly the images of the neighbours of $u$, with equal bond types and none of them is unmatched. By induction on $d\leq R$, $\pi$ maps the nodes at a distance $d$ from $i$ onto the nodes at distance $d$ from $\pi(i)$. Attributes agree on $U$ and bond types agree on $U\times U$ due to $U\cap D=\emptyset$.
\end{proof}

\begin{proof}[Proof of Proposition~\ref{prop:cancellation}]
As $\pi$ is injective, $V_A$ is the disjoint union of $\pi(V_B)$ and the unmatched nodes, so
$r=\sum_{i\in V_B}[\psi_B(i)-\psi_A(\pi(i))]-\sum_{j\notin\pi(V_B)}\psi_A(j)$,
and the terms with $i\notin D_R$ vanish by Lemma~\ref{lem:ball}. For invariance, attach $H$ to $i_0$ and to $\pi(i_0)$, where $\mathrm{dist}_B(i_0,D)>2R$, and extend $\pi$ by the identity on $H$. As $H$ is pendant and fully matched, and $i_0$ and $\pi(i_0)$ change identically, the sets $D$, $D_R$ and the unmatched nodes are unchanged, so it suffices to check that no term of the formula changes. For $i\in D_R$ the triangle inequality gives $\mathrm{dist}_B(i,i_0)>R$, so $i_0$ and $H$ lie outside the $R$-ball of $i$ in $G_B$. By Lemma~\ref{lem:ball} applied to $i_0$, the $R$-ball of $\pi(i_0)$ in $G_A$ is the image of the $R$-ball of $i_0$, hence contains no $\pi(i)$ with $i\in D_R$ and no unmatched node. Therefore, every term of the formula is unchanged.
\end{proof}

For chemistry, this means the difference fingerprint $r$ only depends on the atoms at the reaction centre and within $R$ bonds of this centre, and any leaving atoms. All other atoms have the same environment on both sides and cancel each other out. Invariance is a consequence of this, as decorating reactants and products identically far from the reaction centre leaves $r$ unchanged. In short, atoms are split into three groups, those that changed or are close to a change, those far away from any change and those that leave. According to Lemma~\ref{lem:ball}, the middle group contributes equally to sums and disappears, leaving only the first group plus the leaving atoms.

Similar to message passing, the encoder updates the embedding of each atom from its bonded neighbours for $K$ rounds. Atom terms have $R=K$, and bond terms that read the embeddings of an atom and of its neighbours along bonds of one type have $R=K+1$. In Section~\ref{sec:load_bearing}, we use $K=3$ and attachment sites at least 9 bonds from $D$. A readout $\rho\left(\sum_B\psi,\sum_A\psi\right)$ with a generic $\rho$ has neither cancellation nor invariance, as the sums themselves change with every spectator atom.

\subsection{Validity of the token game}
\begin{proposition}[Validity] 
\label{prop:validity}
Let $m^\circ_A=m_A+\frac{1}{2}\sum_i \mathbf{1}_{S_i}$ be the marking of the precursors with half a token of tolerance on every slack place. For all weights, given $m$ is a marking visited by the token game described in Section~\ref{sec:chemical_reactions}, then $m^\circ=m+\frac{1}{2}\sum_i \mathbf{1}_{S_i}$ is reachable from $m^\circ_A$ in the valence net. Specifically $m$ is non-negative on the bond places and at least $-\frac{1}{2}$ on the slack places, has the valence budgets of $m_A$, and $\sum_j b_{ij}\leq x^{(i)^\top}m_A+\frac{1}{2}$ for every atom $i$.

\end{proposition}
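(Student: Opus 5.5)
We argue by induction on the number of firings the token game has made, with the shifted marking $m^\circ=m+\frac{1}{2}\sum_i\mathbf{1}_{S_i}$ carried along. Since $\sum_i\mathbf{1}_{S_i}$ is a constant vector, every firing $m\mapsto m+C_t$ of the game is the same firing $m^\circ\mapsto m^\circ+C_t$ of the shifted marking. The base case is $m=m_A$, where $m^\circ=m^\circ_A$ is trivially reachable. Only the induction step remains: whenever the token game fires $t$ at $m$, the transition $t$ must be enabled at $m^\circ$ in the valence net in the sense of Section~\ref{sec:preliminaries}.

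This is where the mask enters. By Section~\ref{sec:chemical_reactions}, a transition is offered a positive probability in~\eqref{eq:token_game} only if it is enabled with half a token of tolerance on the slack places, that is, enabled at $m^\circ$. The statement should hold for all weights, so we have to rule out weights that put probability on masked transitions. In exact arithmetic the indicator in~\eqref{eq:token_game} sets their probability to zero for every value of $\theta$. In the implementation the mask uses a log-rate of $-10^4$. We handle this with the remark in the preamble of Appendix~\ref{sec:proofs}: whenever the log-rate of STOP exceeds $-10^3$, a masked transition never outranks STOP in greedy or beam decoding, so it is never selected.

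Two details of enabling at $m^\circ$ need checking.
\begin{itemize}
\item For $t^{+\delta}_{ij}$, the input places are $S_i$ and $S_j$, and enabling means $s_i+\frac{1}{2}\geq\delta$ and $s_j+\frac{1}{2}\geq\delta$. This is exactly the tolerant valence check.
\item For $t^{-\delta}_{ij}$, the input place is $B_{ij}$. The bond places carry no tolerance, and the rule that each bond place is re-typed at most once ensures $b_{ij}\geq\delta$.
\end{itemize}
With this, the firing $m^\circ\xrightarrow{t}m^\circ+C_t$ is legal, and appending it to the enabled sequence from the induction hypothesis gives reachability. I expect this step, reconciling the mask of the implementation and the tolerance convention with the formal enabling rule at $m^\circ$, to be the main obstacle. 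The rest is bookkeeping.

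The consequences then follow from facts already established.
\begin{itemize}
\item \emph{Sign bounds.} Reachable markings are non-negative, because enabled firings on a pure net keep every input place at or above $0$. Hence $m^\circ\geq0$, which gives $b_{ij}\geq0$ and $s_i\geq-\frac{1}{2}$.
\item \emph{Valence budgets.} The state equation~\eqref{eq:state} gives $m^\circ=m^\circ_A+C\sigma$, so $m=m_A+C\sigma$. Each budget $x^{(i)}$ lies in $\ker C^\top$, as computed in the preamble, so $x^{(i)\top}m=x^{(i)\top}m_A$.
\item \emph{Bond-order bound.} From the budget identity, $\sum_j b_{ij}=x^{(i)\top}m_A-s_i\leq x^{(i)\top}m_A+\frac{1}{2}$, using $s_i\geq-\frac{1}{2}$.
\end{itemize}
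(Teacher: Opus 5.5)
Your proof is correct and follows the paper's: induction over the steps of the game, where the half-token valence check is exactly enabling of $t^{+\delta}_{ij}$ at $m^\circ$ and breaking steps are enabled outright, after which non-negativity, the valence budgets and the bond-order bound follow from reachability and the $P$-invariants. One small correction: $t^{-\delta}_{ij}$ is enabled because every step re-types $B_{ij}$ to another order in $\{0,1,\frac{3}{2},2,3\}$, so $b_{ij}-\delta\geq0$, and the at-most-once rule plays no part; also, your discussion of the $-10^4$ log-rate is not needed, since the statement concerns the game with the exact indicator, which gives masked transitions zero probability.
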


\begin{proof}
    By induction over the steps. A step re-types one bond place $B_{ij}$ from its order to another order in $\{0,1,\frac{3}{2},2,3\}$. If the order grows by $\delta$, the step is a firing of $t^{+\delta}_{ij}$. The game admits it only if $\delta\leq s_i+\frac{1}{2}$ and $\delta\leq s_j+\frac{1}{2}$, which is the enabling rule at $m^\circ$, whose slack places hold $s_i+\frac{1}{2}$. If the order is reduced by $\delta$, the step is a firing of $t^{-\delta}_{ij}$, which is enabled. Therefore, the next marking is reachable, non-negative, and the valences, being $P$-invariant, remain unchanged.
\end{proof}

A few notes. (1) The half token is needed for aromatic bonds. (2) We make the following design choices given the data: A game re-types each bond place once per reaction at most and fires twelve times at most. (3) The budget of an atom depends on its element and on its state in $m_A$ as an atom already cationic or hypervalent there keeps its spare tokens. (4) Training states $m_A+C\sigma_F$ apply random parts of a target firing vector without order and does not have to be reachable. The guarantee only concerns the markings visited at inference. (5) A new bond needs $\delta\geq1$, so a carbon with a valence budget of four, with $\sum_j b_{ij}\leq4\frac{1}{2}$, never gets a fifth bond. Aromatic carbons at a ring fusion hold $4\frac{1}{2}$ in $m_A$ and aromatic carbons with an exocyclic double bond hold $5$. Only the Kekulé net of Appendix\ref{sec:implementation_details} caps their total bond order at four.

\begin{lemma}[Realisability on the valence net]
\label{lem:realisability}
    Let $\sigma$ be a firing vector of the valence net with $\sigma_t\leq1$ for every transition $t$ and at most one fired transition on every bond place. Let $m=m_A+C\sigma$ be non-negative. Then every sequence that fires the transitions of $\sigma$ once each, with the bond-breaking ones first, is enabled at $m_A$, so $\sigma$ is the count vector of a firing sequence enabled at $m_A$. The same holds with $m^\circ_A$, the precursors with half a token more on every slack place (see Proposition~\ref{prop:validity}), in place of $m_A$ whenever $m^\circ=m^\circ_A+C\sigma$ is non-negative.
\end{lemma}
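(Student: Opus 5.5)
I will order the transitions of $\sigma$ so that every bond-breaking transition $t^{-\delta}_{ij}$ comes before every bond-forming transition $t^{+\delta}_{ij}$; the order within each group is arbitrary. Then I will check enabledness step by step. The key structural fact is the hypothesis that at most one fired transition touches each bond place $B_{ij}$. Hence along the whole sequence the token count on $B_{ij}$ changes at most once. Before that change it equals $(m_A)_{B_{ij}}$, and after it equals $m_{B_{ij}}$.

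\emph{Breaking phase.} The only input place of $t^{-\delta}_{ij}$ is $B_{ij}$, with weight $\delta$. No earlier transition has touched $B_{ij}$, so the place still holds $(m_A)_{B_{ij}}$. Since $m=m_A+C\sigma$ and $t^{-\delta}_{ij}$ is the only fired transition on $B_{ij}$, we get $m_{B_{ij}}=(m_A)_{B_{ij}}-\delta\geq0$. So $(m_A)_{B_{ij}}\geq\delta$ and the transition is enabled. Breaking transitions only add tokens to slack places. Every slack place is therefore at least its value in $m_A$, and non-decreasing, throughout this phase.

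\emph{Forming phase.} The input places of $t^{+\delta}_{ij}$ are $S_i$ and $S_j$, each with weight $\delta$. Fix an atom $i$. During this phase $s_i$ only decreases. At the moment any forming transition at $i$ fires, the current $s_i$ is at least the final value $m_{S_i}$ plus the total $\delta$ still to be drawn from $S_i$, which includes the one now firing. This holds because all additions to $S_i$ happened in the breaking phase, and the final value is the current value minus all remaining withdrawals. As $m_{S_i}\geq0$ by hypothesis, the current $s_i$ is at least $\delta$; the same argument applies to $S_j$. So the transition is enabled.

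Every step is enabled, so the sequence is enabled at $m_A$ and its count vector is $\sigma$, because each transition fires exactly once with $\sigma_t\le1$. Here $\sigma_t\in\{0,1\}$ in effect, since one transition per bond place with $\sigma_t\le 1$ means each fires at most once.

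For the second statement I will rerun the same argument verbatim with $m^\circ_A$ in place of $m_A$. Adding $\frac12\sum_i\mathbf 1_{S_i}$ only shifts slack places and commutes with $C\sigma$, so the final marking is $m^\circ=m^\circ_A+C\sigma$. The only hypothesis used above is non-negativity of the final marking, which is now assumed for $m^\circ$.

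The main obstacle is the forming phase: the monotonicity argument. It needs slack places to be only increased first and only decreased afterwards, so the final marking is a lower bound for the intermediate ones. The breaking-first order is exactly what guarantees this. The one-transition-per-bond-place hypothesis rules out a bond place that is first formed and later broken within $\sigma$, which would otherwise need the reverse order.
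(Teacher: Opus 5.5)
Your proof is correct and follows the paper's argument step for step. Bond-breaking firings go first; each is enabled because its bond place is untouched until then and $m\geq0$ gives $\delta\leq b_{ij}$. Each bond-forming firing is then enabled because the current slack at $i$ equals the final slack $m_{S_i}\geq0$ plus all withdrawals still pending at $i$, including its own $\delta$. The $m^\circ_A$ case repeats the same argument verbatim, as in the paper.
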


\begin{proof}
    A firing $t^{-\delta}_{ij}$ of $\sigma$ is enabled at every marking before it, as no other firing of $\sigma$ changes $B_{ij}$ and $\delta\leq b_{ij}$ at $m_A$ as $m\geq0$. Breaking a bond only adds tokens to slack places. Let $s^\prime$ be the slack after all breaks, and let $t^{+\delta}_{ij}$ be a bond-forming firing after some of the others. Every forming firing removes tokens, which means that $S_i$ then holds at least $s^\prime_i$ minus the tokens of all forming firings at $i$ except this one, that is, at least $m_{S_i}+\delta\geq\delta$. Same for $S_j$. Therefore, every step is enabled and the same holds for $m^\circ_A$.
\end{proof}

\begin{figure}[h]
    \centering
    \setchemfig{atom sep=1.1em, bond offset=0.9pt, double bond sep=1.8pt, bond style={line width=0.6pt}}%
\begin{tikzpicture}[
    font=\footnotesize, >={Stealth[length=1.5mm, width=1.2mm]},
    mol/.style={inner sep=0pt, font=\scriptsize},
    flow/.style={->, line width=0.6pt, black!75},
    excluded/.style={->, line width=0.6pt, black!30, densely dashed},
    marking/.style={circle, draw, line width=0.6pt, fill=black!8, minimum size=2.6mm, inner sep=0pt},
    unreached/.style={circle, draw=black!30, densely dashed, line width=0.6pt, fill=white, minimum size=2.6mm,
                      inner sep=0pt},
    prob/.style={font=\scriptsize, text=magenta!75!black, fill=white, inner sep=0.8pt},
    head/.style={font=\footnotesize\bfseries, anchor=west, inner sep=0pt},
    note/.style={font=\scriptsize, text=black!70, inner sep=0pt, align=left},
    set/.style={font=\scriptsize, inner sep=1pt}]

  \node[head] at (0, 4.72) {(a) the most probable order};

  \node[mol] at (1.2, 4.2) {\chemfig{OHC-*6(-=-(-[,,,,draw=magenta!75!black]B|{(OH)_2})=-=)}};
  \node[mol] at (1.2, 3.42) {\chemfig{Br-*5(=-O-=-)}};

  \node[mol] at (4.75, 4.2) {\chemfig{OHC-*6(-=-(-[:0,0.55,,,draw=none]{\odot})=-=)}};
  \node[mol] at (4.75, 3.42)
       {\chemfig{Br-[,,,,draw=magenta!75!black]*5(=-O-=-)}\quad\chemfig{{\odot}-[:0,1,,,draw=none]B|{(OH)_2}}};

  \node[mol] at (8.55, 4.2)
       {\chemfig{OHC-*6(-=-(-[:0,0.55,,,draw=none]{\odot}-[:0,1.9,,,{draw=magenta!75!black,densely dashed}]{\odot}-[:0,0.55,,,draw=none]*5(=-O-=-))=-=)}};
  \node[mol] at (8.55, 3.42)
       {\chemfig{Br-[:0,1.35,,,draw=none]{\odot}}\qquad\chemfig{{\odot}-[:0,1,,,draw=none]B|{(OH)_2}}};

  \node[mol] at (12.15, 4.2) {\chemfig{OHC-*6(-=-(-*5(=-O-=-))=-=)}};
  \node[mol, text=black!55] at (12.15, 3.42) {HBr\qquad HB(OH)$_2$};

  \foreach \x/\m in {1.2/{m_0=m_A}, 4.75/m_1, 8.55/m_2, 12.15/{m_3=m_B}}
    \node[font=\scriptsize] at (\x, 2.97) {$\m$};
  \foreach \x/\verb/\bond/\p in {2.5/break/C--B/0.68, 6.25/break/C--Br/1.00, 10.15/form/C--C/1.00} {
    \draw[flow] (\x, 3.81) -- ++(0.8, 0);
    \node[note, text=black, anchor=south, align=center] at (\x + 0.4, 3.88) {\verb\\\bond};
    \node[prob, anchor=north] at (\x + 0.4, 3.74) {\p};
  }

  \node[head] at (0, 2.42) {(b) all orders};
  \node[note, anchor=north west] at (0, 2.14) {$a$\,\ break C--B\\ $b$\,\ break C--Br\\ $c$\,\ form C--C};

  \node[marking, label={[set]below:$\emptyset$}] (n0) at (1.2, 1.04) {};
  \node[marking, label={[set]above:$b$}] (na) at (4.75, 1.66) {};
  \node[marking, label={[set]below:$a$}] (nb) at (4.75, 1.04) {};
  \node[unreached, label={[set, text=black!45]below:$c$}] (nc) at (4.75, 0.42) {};
  \node[unreached, label={[set, text=black!45]above:$bc$}] (nac) at (8.55, 1.66) {};
  \node[marking, label={[set]below:$ab$}] (nab) at (8.55, 1.04) {};
  \node[unreached, label={[set, text=black!45]below:$ac$}] (nbc) at (8.55, 0.42) {};
  \node[marking, label={[set]below:$abc$}] (nabc) at (12.15, 1.04) {};

  \draw[excluded] (n0) -- (nc);
  \draw[excluded] (na) -- (nac);
  \draw[excluded] (nb) -- (nbc);
  \draw[excluded] (nc) -- (nac);
  \draw[excluded] (nc) -- (nbc);
  \draw[excluded] (nac) -- (nabc);
  \draw[excluded] (nbc) -- (nabc);
  \draw[flow] (n0) -- node[prob, pos=0.5] {0.32} (na);
  \draw[flow] (n0) -- node[prob, pos=0.5] {0.68} (nb);
  \draw[flow] (na) -- node[prob, pos=0.5] {1.00} (nab);
  \draw[flow] (nb) -- node[prob, pos=0.5] {1.00} (nab);
  \draw[flow] (nab) -- node[prob, pos=0.5] {1.00} (nabc);

  \draw[flow] (nabc) -- node[prob, above=1pt, fill=none] {1.00} ++(0.7, 0) node[right=1pt, font=\scriptsize] {\textsc{stop}};
  \node[note, anchor=north east, align=right] at (13.5, 0.66)
       {$P(\text{product}) = 0.9999$\\ 2 of 6 orders enabled};

  \pgfresetboundingbox
  \useasboundingbox (-0.03, 0.1) rectangle ++(13.6, 4.8);
\end{tikzpicture}%
    \caption{A bromo Suzuki-type coupling as a token game of the model trained without precomputed mappings. (a) Showing the most probable order, each bond that changes in magenta, dashed if formed. $\odot$ is a token that a broken bond returns to the slack place of its atom. The order is valence bookkeeping, not the Pd mechanism, which would break C-Br first. (b) Forming C-C requires a token of free valence on both aryl carbons that can only be supplied by the two breaks, meaning enabling excludes the four orders that form C-C before the two bond breaks. The two remaining orders fire the same transitions and reach the same product.}
    \label{fig:tokengame}
\end{figure}

This means that the converse of the state equation, which fails in general as discussed in Section~\ref{sec:preliminaries} holds for the firing vectors that the mapper returns and that the token game trains on, whenever the seated product marking is non-negative. In chemistry terms this means that breaking every bond first frees all the valence that the formations need. Therefore, the valence rule can forbid an order of the bond changes, such as a new bond to a carbon that still has four bonds, however, it can never forbid a product in which every atom stays within its valence budget, as long as each bond changes at most once because breaking the bonds first is then always an allowed order. So if we keep only training targets with an allowed order (Appendix~\ref{sec:implementation_details}), it only ever asks that their products have valid valences.

\subsection{What two markings can and cannot identify}
\label{sec:markings_identify}
Our layers are shaped by three gauge freedoms that limit what any estimator recovers from pairs of markings.

\begin{itemize}
    \item \textbf{Rate scale.} The embedded jump chain only uses $\lambda$ through $\lambda_t(m) / (\lambda_{\mathrm{STOP}}(m) + \sum_{t'} \lambda_{t'}(m))$, this means that a common factor $e^{c(m)}$ on all the rates at a marking does not change any observation. Untimed pairs, as we have in chemistry, only identify the relative law. However, given timed chemistry data means that reactants and products by themselves could not tell fast kinetics observed briefly from slow kinetics observed for long. A $P$-invariant term $x^\top m$ in every log rate, which in chemistry is the valence budget of an atom, is such a factor that is constant along every path. Therefore, which bond changes happen would solely depend on how the possible changes compare and never on an overall speed or concentration.
    \item \textbf{The kernel of C.} Any given pair of markings only fixes $\sigma$ modulo $\ker C$, which is large in the valence net. This means that a bond formed and then broken again, or a double bond made in one step instead of two, does not leave any trace in the products. Excluding both those cases, the token game re-types each bond place only once, making $\sigma\mapsto C\sigma$ injective and $\sigma$ a function of the change $P_\pi m_B-m_A$ under a correspondence $\pi$. The map $\pi\mapsto\sigma(\pi)$ is constant on correspondences that differ by a symmetry of an unchanged part, moving the ambiguity to the choice of $\pi$, and the mapper whose minimum contains no cycling, is the right encoder for the inverse problem. For chemistry, this means that swapping two equivalent atoms, while changing the atom map, does not affect the bond changes. Therefore, uncertainty is limited to which atom is which, and this is left to the atom mapper.
    \item \textbf{Orders inside a trace.} Assuming that transitions $a$ and $b$ are enabled at $m$ in any order and nothing enabled after both, the chain fires $a$ or $b$ first with a probability of $x$ or $y$, respectively, and stops with a probability $z=1-x-y$, and stops after the first firing with a probability of $s_1$ or $s_2$. The four final markings are seen with the probabilities $z,xs_1,ys_2$ and $x(1-s_1)+y(1-s_2)$, so the order is never recovered. Therefore, two markings identify which transitions fire, not the order in which they do. This also applies to chemistry, and not required by our models.
\end{itemize}

In summary, products fix the net bond changes and not the route, the speed, or anything constant along the way, and the layers are shaped in a way that they learn exactly the part that is fixed and nothing else.

\section{Implementation details}
\label{sec:implementation_details}

\textbf{Training.} All our networks are trained with AdamW with a weight decay of $10^{-5}$ as optimiser and a one-cycle learning rate schedule peaking at $10^{-3}$, and at $4\times10^{-4}$ for the rate law with width 256 after a warm-up over the first 10~\% of the steps with gradients clipped at norm 1. For the token game on USPTO-MIT we use $4\cdot10^{-4}$. The table gives the hyperparameters of the models we report on Schneider50k and USPTO-MIT. The DRFP baseline is a perceptron with one hidden layer of $1\,024$ units and a dropout of 0.2, trained with AdamW and a weight decay of $10^{-4}$ for 40 epochs at a constant learning rate of $10^{-3}$.

\begin{center}\small\setlength{\tabcolsep}{5pt}
\resizebox{\linewidth}{!}{%
\begin{tabular}{lrrrrr}
\toprule
 & token game & width 256 & one-shot counterparts & classifier & classifier, firing vector\\
\midrule
width & 128 & 256 & 128 & 128 & 128\\
message-passing rounds & 6 & 8 & 6 & 3 & 3\\
attention layers & 4 & 8 & 4 & -- & --\\
epochs & 60 & 60 & 60 & 60 & 60\\
batch size & 32 & 32 & 32 & 64 & 64\\
parameters & 1.6M & 9.8M & 1.6M & 2.1M & 2.5M \\
\bottomrule
\end{tabular}
}
\end{center}

\begin{itemize}
    \item \textbf{Features.} We one-hot encode the element, degree, formal charge, number of hydrogens, aromaticity, and ring membership. Atom-map numbers are removed before canonical atom ordering so they are not leaked by either features or order. The encoder has one message function per bond type, residual updates, and layer normalisation.
    \item \textbf{Token game.} In addition to the fixed features above, the input contains two quantities that change with the marking, namely the hydrogen count of every atom at the current marking (which is the slack place, its free valence minus capacity and negative charge) and whether a bond place of the atom has fired. Pair logits come from $[h_i+h_j,h_i\odot h_j]$ (the pair feature read by the rate law for the bond place $B_{ij}$), the current bond type, a same-molecule flag, and a fired flag. Batch size is 32. Training fires a random part $\sigma_F \le \sigma^*$ of some $\sigma^* \in \Sigma^*$, and the loss is $-\log \sum_t \operatorname{Pr}(t \mid m_A + C\sigma_F)$ over the enabled transitions that continue a vector of $\Sigma^*$ containing $\sigma_F$, together with $\mathrm{STOP}$ if $\sigma_F \in \Sigma^*$.
    \item \textbf{Classifier.} We want locality, so there is no attention. Dropout is $0.2$. The participation gate learns whether a precursor molecule holds a seated atom from the maps via a binary cross-entropy of $0.5$. We matched the number of parameters with the generic comparison. The generic head pools each side and adds one learned term per seated atom. When marked, the transition types that fired are predicted by an auxiliary head, with targets from the maps of the mapper of all training reactions, including those with withheld class labels.

    \item \textbf{Forward.} On the USPTO-480K forward prediction, the rate law has width 256, eight message-passing rounds and eight attention layers over the whole marking, 9.8M parameters, and is trained for 60 epochs with batches of 32 reactions, a peak learning rate of $4\cdot10^{-4}$ and mixed precision. The targets are the minimum firing vectors of the mapper that decode to the recorded product, up to eight per reaction. A training reaction without such a vector is left out, which keeps $393\,621$ reactions, against $399\,184$ with the recorded atom maps. After every epoch the model is scored greedily on $1\,500$ validation reactions, and the best epoch is kept. At test time, a beam of width 5 merges count-equivalent sequences. The variant without the enabling rule is the same network with the mask removed, so it may fire any transition whose input places hold tokens, whether or not the atoms have room for the new bond.

\begin{center}\small\setlength{\tabcolsep}{5pt}
\newcommand{\hd}[1]{\begin{tabular}[b]{@{}c@{}}#1\end{tabular}}
\begin{tabular}{@{}lccc@{}}
\toprule
& \hd{token game,\\recorded maps} & \hd{token game,\\targets of the net} & \hd{token game, targets\\of the net, no enabling} \\
\midrule
width                        & 256 & 256 & 256 \\
message-passing rounds       & 8   & 8   & 8   \\
attention layers             & 8   & 8   & 8   \\
epochs                       & 60  & 60  & 60  \\
batch (reactions)            & 32  & 32  & 32  \\
peak learning rate           & $4\cdot10^{-4}$ & $4\cdot10^{-4}$ & $4\cdot10^{-4}$ \\
beam width at test time      & 5   & 5   & 5   \\
training reactions           & $399\,184$ & $393\,621$ & $393\,621$ \\
parameters                   & 9.8M & 9.8M & 9.8M \\
\bottomrule
\end{tabular}
\end{center}
\end{itemize}

\begin{itemize}
\item \textbf{Enzyme classifiers.} The CARE and ECREACT classifiers are the state-equation readout of
Section~\ref{sec:state_equation_readout} with the participation gate, trained with AdamW, a weight decay of $10^{-4}$ and a one-cycle schedule for 40 epochs; the model after the last epoch is evaluated. They have 4.6M and 4.8M parameters, the difference coming from the output layer over $4\,874$ and $5\,192$ EC numbers. The EnzymeMap classifier reads the firing vector of the mapper, as on Schneider50k, and is trained for 60 epochs.

\begin{center}\small\setlength{\tabcolsep}{5pt}
\newcommand{\hd}[1]{\begin{tabular}[b]{@{}c@{}}#1\end{tabular}}
\begin{tabular}{@{}lcccc@{}}
\toprule
& \hd{arrow\\net} & \hd{electron\\net} & \hd{classifier,\\CARE, ECREACT} & \hd{classifier,\\EnzymeMap} \\
\midrule
width                        & 256 & 256 & 128 & 128 \\
message-passing rounds       & 6   & 6   & 3   & 3   \\
attention layers             & 6   & 6   & --  & --  \\
epochs                       & 12  & 12  & 40  & 60  \\
batch (atom pairs / reactions) & $\le 2\cdot10^6$ & $\le 2\cdot10^6$ & 16 & 16 \\
peak learning rate           & $5\cdot10^{-4}$ & $5\cdot10^{-4}$ & $10^{-3}$ & $10^{-3}$ \\
parameters                   & 7.3M & 8.2M & 4.6M\,/\,4.8M & 2.5M \\
\bottomrule
\end{tabular}
\end{center}
\end{itemize}

\begin{itemize}
\item \textbf{Mechanism nets.} The shell capacity of an element is the largest shell of that element seen at least ten times in the training steps. Eight electrons for C, N, O, F, Cl, and Br, twelve for P, S, and I, never below what the atom starts with. Every formal charge stays within a window per element, widened by one while the step progresses. Both nets are trained with AdamW, a weight decay of $10^{-2}$, a one-cycle schedule with a warm-up over the first $5$~\% of the updates and gradients clipped at norm 1. A step of $n$ atoms has $n^2$ bond places, so batches are formed by size rather than by count: steps are sorted by size, and each batch holds steps of similar size up to $2\cdot10^6$ padded atom pairs and at most 256 steps, about 138 steps on average and $10\,309$ batches per epoch, in a random order per epoch. Twelve epochs are about $1.2\cdot10^5$ updates. The pair head has width 64, a step fires at most 16 arrows, and validation top-1 is measured greedily on $3\,000$ validation steps after every epoch. At test time, seed 0 uses a beam of 10 and seeds 1 and 2 a beam of 5.

\begin{center}\small\setlength{\tabcolsep}{5pt}
\newcommand{\hd}[1]{\begin{tabular}[b]{@{}c@{}}#1\end{tabular}}
\begin{tabular}{@{}lcc@{}}
\toprule
& \hd{arrow\\net} & \hd{electron\\net} \\
\midrule
token                              & electron pair & electron \\
width                              & 256 & 256 \\
message-passing rounds             & 6   & 6   \\
attention layers                   & 6   & 6   \\
pair head width                    & 64  & 64  \\
epochs                             & 12  & 12  \\
batch (padded atom pairs)          & $\le 2\cdot10^6$ & $\le 2\cdot10^6$ \\
peak learning rate                 & $5\cdot10^{-4}$ & $5\cdot10^{-4}$ \\
weight decay                       & $10^{-2}$ & $10^{-2}$ \\
arrows per step, at most           & 16  & 16  \\
beam width at test time            & 10 / 5 & 10 / 5 \\
training steps                     & $1\,445\,189$ & $1\,445\,189$ \\
parameters                         & 7.3M & 8.2M \\
\bottomrule
\end{tabular}
\end{center}
\end{itemize}

\paragraph{Kekul\'e structures in the electron net.} Molecules enter both mechanism nets as Kekul\'e structures, in which aromatic rings are written with alternating single and double bonds, as an aromatic bond of order $\frac{2}{3}$ would hold three electrons. Arrow pushing and the bond--electron matrix used by \citet{Joung2025} start from a Kekul\'e structure for the same reason. The valence net, in contrast, keeps aromatic bonds of order $\frac{3}{2}$, as its tokens count bond order rather than electrons, so a half token per atom is enough.

\paragraph{A Kekul\'e variant of the valence net.} Every molecule is kekulised with RDKit~\citep{Landrum2026}, so every bond place holds a whole number of tokens. Three changes are applied. First, a ring usually has more than one Kekul\'e structure, and RDKit may write a product ring in a different one than the precursor ring. This would make every bond of an unchanged ring look changed. Before the firing vector is computed, we therefore redraw each aromatic ring of the product with its double bonds in the same positions as in the precursor, using the atom map to match the atoms, so that a ring the reaction does not touch shows no bond changes. The positions are found by a maximum-weight perfect matching on the atoms that need a double bond. When no perfect matching exists, RDKit's structure is kept. Second, the enabling rule is exact, without the half token of tolerance of Proposition~\ref{prop:validity}. Third, a transition that forms a bond is also enabled only if the atom's total bond order to heavy atoms stays within a maximum per element, measured on the training data. These are, 4 for C and N, 3 for O and 6 for S, so a spare token of capacity cannot give an atom more bonds than it can hold. With whole bond orders, the decoding rule for aromatic rings (Appendix~\ref{sec:implementation_details}, rule~3) is no longer needed. Atom maps are computed on the aromatic molecules and transferred to the Kekul\'e atom order through canonical ranks, so both of the representations use the same maps.

\paragraph{Benchmarks.} \paragraph{Benchmarks.} Atom mapping is evaluated on the Golden set by ~\citet{Lin2021}, whose maps were curated by chemists, on the data sets recently introduced by \citet{Phan2026}, and, for enzymatic reactions, on the $41\,510$ reactions of EnzymeMap from \citet{Heid2023EnzymeMap} against their validated maps. Reaction classification uses the published split of Schneider50k, whose 50 reaction classes were assigned by the rule-based NameRxn~\citep{Schneider2015}. EC classification, predicting the enzyme commission number of an enzymatic reaction, uses three benchmarks, namely, ECREACT~\citep{Probst2022biocatalysed} in the split of \citet{Liu2026Enzyformer}, on which six methods were retrained, task~2 of CARE~\citep{Yang2024CARE} in its easy split, and EnzymeMap at EC level~3, where we compare maps from different sources as input to the same classifier (Table~\ref{tab:enzymemap_classification}). Forward prediction is evaluated on USPTO-MIT (USPTO-480K), the patent reactions of \citet{Jin2017} in their standard split with $40\,000$ test reactions, and the ablations use Schneider50k, including training sets of $1\,995$ and $7\,973$ reactions for forward prediction and of $1\,000$ and 250 for classification. Reaction mechanisms are evaluated on the elementary steps of FlowER~\citep{Joung2025}, where the electron net predicts the products of each step and whole reactions are scored as pathways of steps. Products are compared after aromaticity is perceived again, so the Kekul\'e structure a ring is written in does not matter. A test reaction counts for the pathway accuracy if some route from its reactants to a recorded product has every step within the top $k$.

\paragraph{Capacities of the valence net.} Based on the data used in our experiments, one spare token lets N, O, Cl, Br, and I share a lone pair in one extra bond, where the atom still has eight valence electrons, and carbon has none, the enabling rule thereby keeps neutral and anionic C, N, O, and halogen atoms within the octet. S and P get two spare tokens, so they can form the S=O and P=O bonds of sulfoxides and phosphine oxides, but a sulfide cannot become a sulfone in one reaction. An atom that is already cationic or hypervalent in the precursors keeps its spare token, giving it a looser budget, while a nitro nitrogen could take a fifth bond.

\paragraph{Data.} USPTO-MIT by \citet{Jin2017} holds patent reactions from \citet{Lowe2012}, split into $409\,035$ training, $30\,000$ validation, and $40\,000$ test reactions with reagents among the precursors and without stereochemistry. RDKit parses $409\,006$ training and $39\,994$ test reactions~\citep{Landrum2026}. For training and model selection we keep the $401\,102$ reactions with at most 100 and 70 heavy atoms in the precursors and products, respectively, to speed up training. We also remove reactions with a major product in the Golden set ($940$ for targets of the net, $931$ for precomputed maps). The test set is never filtered, the results are shares of all $40\,000$ reactions, and the 6 unparsable reactions count as failures. The nested 1 \% and 10 \% subsets are the first $4\,090$ and $40\,900$ reactions of one fixed shuffle.

Schneider50k by \citet{Schneider2015} contains $50\,000$ reactions from patents in 50 classes from a rule-based classifier. $49\,994$ are parsable by RDKit. For classification, we use the published split, 200 training reactions per class, for a total of $10\,000$, and 800 per class for testing, for a total of $40\,000$. We use 500 training reactions to choose the best epoch.

The elementary steps of FlowER~\citep{Joung2025} come from USPTO-Full reactions whose mechanisms were imputed with $1\,220$ expert templates for 252 reaction classes, keeping the routes that reach the recorded product, together with the curated polar and radical steps of PMechDB and RMechDB. We use the published split into $1\,445\,189$ training, $15\,744$ validation and $162\,002$ test steps, from $250\,782$, $2\,801$ and $28\,049$ overall reactions. Every atom, hydrogens included, is part of the net, and $2\,905$ test steps ($1.79$~\%) move single electrons.

ECREACT~\citep{Probst2022biocatalysed} is used in the split released with Enzyformer~\citep{Liu2026Enzyformer}: $49\,202$ enzymatic reactions with cofactors and common by-products removed from both sides, randomly split into $39\,404$ training, $4\,891$ validation and $4\,907$ test reactions. Compared to the split by \citet{Probst2022biocatalysed}, which keeps the products of training and test reactions apart, this split is random, and as a reaction can have more than one EC number, a test reaction can occur in training under another EC number. As in Enzyformer, training and validation reactions are pooled for training, two that RDKit cannot parse are dropped, leaving $44\,293$. The labels are the $5\,192$ four-level EC numbers of all splits, and the $224$ test reactions whose EC number does not occur in training count as errors.

CARE task~2~\citep{Yang2024CARE}, easy split, holds out reactions rather than EC numbers: $55\,416$ training, $5\,858$ validation and $393$ test reactions over $4\,874$ EC numbers, with a median of four training reactions per EC number.

EnzymeMap is used at EC level~3 with 200 classes. Its own splits hold out whole EC classes, on which a classifier can only score zero, so we hold out reactions instead: $10$~\% of the reactions of every class with at least three reactions go to the test set, drawn once with a fixed seed and shared by all runs, which gives $41\,816$ training and $4\,715$ test reactions.

\paragraph{From a marking to a product.} As previously discussed, a slack token on an atom can be a hydrogen, a lone pair, or a charge, without the marking saying which. This means that decoding a marking into molecules requires a rule set. These rules only use conservation arguments, are chosen on training reactions, and run after the token game has stopped. They are:

\begin{enumerate}
    \item An atom whose free valence decreased first loses hydrogens and becomes an onium centre such as ammonium, once there are none left. Onium carrying a hydrogen are deprotonated.
    \item S, Se and P, and I take up tokens in steps of two without a charge, while the valence expansion of S=O and P=O, and S, Se, P, and the halogens return tokens in steps of two.
    \item In an aromatic ring, surpluses, and deficits at a carbon are settled by moving a hydrogen from or to a ring nitrogen where the ring has one. These are the tautomer shifts that are hidden by bond order $\frac{3}{2}$.
    \item Total charge is conserved. This means that for every new cation, an atom that would have gained a hydrogen becomes an anion instead. Examples are N-oxides, nitro groups, and the halide counter-ions of quaternary salts.
    \item Molecules that were touched by a firing sequence and that hold two or more heavy atoms are written in neutral form. This is by convention, unless their charges pair up within the molecule as, for example, in a nitro group. Single-atom ions stay charged.
\end{enumerate}

Note that the net does not record where a hydrogen comes from, so a fragment that leaves is capped with hydrogen. The record of an atom shows bond orders, hydrogens, and formal charge as $\sum_j b_{ij}+h_i-q_i$. This is the $P$-invariant $x^{(i)\top}m$ of the valence net minus the fixed capacity $c_i$ and, for cations, minus $q_i$. In Schneider50k, this count is the same for a product atom and the precursor atom it maps to in $99.85$ \% of the product atoms. This is the conservation law the net enforces (Proposition~\ref{prop:conservation}), and it is holding in real reactions. The $0.15$ \% that are lost are the price of the guarantee. Furthermore, with these rules, the firing vectors reproduce the product for $99.3$ \% of $5\,000$ training reactions in Schneider50k and for $96.6$ \% of all $40\,000$ test reactions in USPTO-MIT.

\paragraph{The mapping metrics.} On the Golden set, we follow \citet{Lin2021}. A mapping is counts as correct if its condensed graph of reaction is isomorphic to that of the curated mapping. Where the canonical atom orders of the output of RXNMapper and of our molecules differ, the molecules are matched as graphs without stereochemistry.

\paragraph{The forward-prediction metric.} A prediction is assumed correct, if the recorded major product (the largest recorded molecule, canonical SMILES without stereochemistry, which is not contained in USPTO-MIT) is among the molecules touched by a firing and every recorded product molecule occurs in the final marking. Molecules, such as by-products and spectators omitted in the products of a data set are naturally ignored, as in other graph-edit models. All $40\,000$ test reactions of USPTO-MIT, including six that could not be parsed by RDKit count for the denominator.

\section{General nets}
\label{sec:general_nets}
Here we introduce a general net with continuous-time semantics that is separate from chemistry. It gives the layers of NPF when the rate law stays local, their guarantees and a test on measured metabolic fluxes against PGNN~\citep{AdemovicTahirovic2025}.

\subsection{Continuous-time semantics and a local rate law}
\label{sec:continuous_time}
On general nets the rate law of NPF is local, meaning that for a transition $t$ with attributes $a_t$, we set

\begin{equation}
    \lambda_{\theta,t}(m) = \exp\left( \beta_\theta(a_t) + \sum_{p \in {^\bullet}t} g_\theta(m_p, e_p, \mathrm{Pre}(p,t)) \right) \cdot \min\{1, \mathrm{enabl}(t,m)\},
\end{equation}

with two small multilayer perceptrons $\beta_\theta$ and $g_\theta$ learning a log rate constant from the attributes of the transition and a log factor per input place. They are shared across all transitions and nets. Reading only the input places, the law is local. It vanishes when an input place is empty, as Proposition~\ref{prop:enabling_rule} requires the laws. The enabling factor is continuous, as i fluid nets. The product form is that of generalised mass action, which is $g=\operatorname{Pre}(p,t)\log m_p$ up to the enabling factor.

\subsection{Forward semantics, the token-game layer}
\label{sec:forward_semantics}
This is the continuous version of the token game described in Section~\ref{sec:chemical_reactions}. Every transition fires at once, in a fractional amount, compared to the chemistry that fires one at a time. One round with step $h$ computes the demands $d_t=h\lambda_{\theta,t}(m)$, shares out the tokens of each place, limits each transition by its scarcest input, and finally applies the state equation,

\begin{equation}
    \label{eq:continuous_state_equation}
    x_p = \frac{1}{m_p} \sum_{t \in p^\bullet} \mathrm{Pre}(p,t) d_t, \quad v_t = d_t \min_{p \in {^\bullet}t} \varphi(x_p), \quad m^+ = m + Cv, \quad \varphi(x) = \frac{1 - e^{-x}}{x},
\end{equation}

with $\varphi(0)=1$, $\varphi(x_p):=0$ when $m_p=0$ and some consumer has $d_t>0$, and with $\min_{p\in^\bullet t}\varphi(x_p):=1$ when $^\bullet t=\emptyset$, so a source transition fires its full demand. A place with load $x_p$ offers $m_p(1-e^{-x_p}$ tokens that is shared in proportion to the demands. A forward model is $K$ rounds with tied weights, making depth time, and $h=1/K$. For any demands, the layer conserves and stays non-negative.

\begin{proposition}[Non-negativity of the token-game layer]
    For all $d\geq0$ and $m\geq0$, the marking $m^+$ of Equation~\ref{eq:continuous_state_equation} satisfies $m^+\geq0$, $m^+_p>0$ whenever $m_p>0$, and $x^\top m^+=x^\top m$ for every $P$-invariant $x$.
\end{proposition}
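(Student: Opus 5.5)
Conservation is the quickest part, so I will dispose of it first. The update is $m^+=m+Cv$ with $v\in\mathbb{R}^T$, which is the firing form. For every $P$-invariant $x$ we have $x^\top C=0$, so $x^\top m^+=x^\top m+x^\top Cv=x^\top m$; this is the sufficiency direction of Proposition~\ref{prop:conservation}. The real work is non-negativity, and I will do it place by place. Fix $p$ and write
\[
m^+_p=m_p+\sum_{t\in p^\bullet\cdots}\mathrm{Post}(p,t)v_t-\sum_{t\in p^\bullet}\mathrm{Pre}(p,t)v_t .
\]
Every $v_t=d_t\min\varphi(\cdot)\geq 0$, because $d\geq0$ and $\varphi\geq0$. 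The production term is therefore non-negative, and it is enough to show
\[
\sum_{t\in p^\bullet}\mathrm{Pre}(p,t)v_t\leq m_p(1-e^{-x_p}),
\]
with strict slack when $m_p>0$.

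For $m_p>0$, I use the fact that each consumer $t$ of $p$ has $p\in{}^\bullet t$. Hence $\min_{q\in{}^\bullet t}\varphi(x_q)\leq\varphi(x_p)$, and so $v_t\leq d_t\varphi(x_p)$. Summing and using the definition of $x_p$ gives
\[
\sum_{t\in p^\bullet}\mathrm{Pre}(p,t)v_t\leq\varphi(x_p)\sum_{t\in p^\bullet}\mathrm{Pre}(p,t)d_t=\varphi(x_p)\,m_p x_p=m_p(1-e^{-x_p}).
\]
When $x_p>0$ this is immediate from the formula for $\varphi$. When $x_p=0$ every term vanishes, since $\varphi(0)=1$ and all $\mathrm{Pre}(p,t)d_t=0$. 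It follows that $m^+_p\geq m_p e^{-x_p}>0$, because $x_p<\infty$. This gives both $m^+\geq0$ on such places and the strict positivity claim.

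When $m_p=0$ there are two cases. If some consumer has $d_t>0$, the convention sets $\varphi(x_p)=0$. Then every consumer $t$ with $\mathrm{Pre}(p,t)>0$ has $v_t\leq d_t\varphi(x_p)=0$, so nothing is taken and $m^+_p\geq0$. If instead all consumers have $d_t=0$, then $v_t=0$ for each of them, with the same conclusion.

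The step I expect to be delicate is handling these conventions consistently rather than any inequality. Three points need care. First, $\min_{q}\varphi(x_q)$ must still be bounded by $\varphi(x_p)$ when some other input place $q$ uses the $\varphi:=0$ convention. That is harmless, since it only lowers $v_t$. Second, $\varphi$ has to be non-negative and well defined on $[0,\infty)$, which follows from $1-e^{-x}\geq0$. Third, source transitions have ${}^\bullet t=\emptyset$ and so never appear in any $p^\bullet$; they only add tokens. I would also note explicitly that $\mathrm{Post}$ contributions are non-negative, so the bound on consumption suffices without assuming the net is pure.
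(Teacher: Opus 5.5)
Your proof is correct and follows the paper's argument step for step. You bound each consumer by $v_t\le d_t\varphi(x_p)$ because $p\in{}^\bullet t$ enters the minimum, sum this to get consumption at most $m_p(1-e^{-x_p})$, treat $m_p=0$ through the two cases of the convention, and get conservation from $x^\top C=0$. The only blemish is the garbled index on your production sum, which should run over all transitions (the producers of $p$) rather than over $p^\bullet$.
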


\begin{proof}
    Let  $\mathrm{out}_p=\sum_t \operatorname{Pre}(p,t)v_t$ be the consumption at $p$. If $m_p>0$, every consumer $t\in p^\bullet$ has $v_t\leq d_t\varphi(x_p)$ because $p\in^\bullet t$ takes part in the minimum, therefore $\mathrm{out}_p\leq\varphi(x_p)\sum_t\operatorname{}{Pre}(p,t)d_t=\varphi(x_p)x_p m_p=m_p(1-e^{-x_p})<m_p$. If $m_p=0$, then either $\sum_t\mathrm{Pre}(p,t)d_t=0$, so that $d_t=0$ for all consumers, or the convention $\varphi(x_p)=0$ applies. In both cases, $\mathrm{out}_p=0$. Therefore, $m_p^+=m_p-\mathrm{out}_p+\sum_t\mathrm{Post}(p,t)v_t\geq m_p e^{-x_p}$, which is non-negative, and positive if $m_p>0$. Finally, $x^\top m^+=x^\top m+x^\top Cv=x^\top m$ for every $P$-invariant $x$.
\end{proof}

Therefore, each round is an enabled, continuous step of the net, which is how the layer stays a net for all weights.

\subsection{Inverse semantics, the firing counts between two markings}
Between two markings of a single run, NPF integrates the rate law over the elapsed time along a token game from $m_A$, whose end is pinned to $m_B$. This gives counts $\Tilde{\sigma}$ and a bounded learned correction that stands in for conditioning the run on its end. A map $\Pi$ then imposes the state equation $\Hat{\sigma}=\Pi(\Tilde{\sigma})\in \mathcal{S}=\{\sigma:C\sigma=m_B-m_A\}$, as the I-projection, the point of $\mathcal{S}$ closes in Kullback-Leibler divergence to $\Tilde{\sigma}$ made positive~\citep{Csiszar1975}.

Every output is positive for all weights, meeting the state equation up to the tolerance of the solve, $0.006$ to $0.019$ mmol/gDW/h in Table~\ref{tab:metabolism}. Up to this tolerance, the error has no component in $\mathrm{im}C^\top$ and learning only has to resolve the part along the $T$-invariants, which are the cycles that cancel. Message passing cannot substitute for this layer.

\begin{proposition}[Locality lower bound]
    On the path net $p_0\rightarrow t_1\rightarrow p_1\dots t_L\rightarrow p_L$ without $T$-invariants and $\sigma^*_k=-\sum_{j<k}(m_B-m_A)_j$., let $f$ be any estimator whose output at $t_k$ depends only on the part of the input within distance $r$ of $t_k$ in the incidence graph. If $p_0$ and $p_L$ are farther than $r$ from $t_k$, then for every $\delta>0$ are two valid inputs on which $f_{t_k}$ coincides while $\sigma^*_k$ differs by $\delta$. Therefore, $\operatorname{sup}|f_{t_k}|\geq\delta/2$ for every $\delta$ that the token range admits.
\end{proposition}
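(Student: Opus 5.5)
The argument is an indistinguishability (two-point) construction followed by the elementary inequality $\max(|a-c|,|b-c|)\geq|a-b|/2$. First I will pin down what "valid input" means: a pair $(m_A,m_B)$ of non-negative markings of the path net for which some $\sigma\geq0$ solves $m_B=m_A+C\sigma$. On the path net, $C_{t_k}=\mathbf{1}_{p_k}-\mathbf{1}_{p_{k-1}}$, so $C$ has full column rank $L$ and $\ker C=\{0\}$. This is the absence of $T$-invariants, and it means $\sigma^*$ is the unique solution. Reading the state equation place by place gives $\Delta_j:=(m_B-m_A)_j=\sigma^*_j-\sigma^*_{j+1}$ (with $\sigma^*_0=\sigma^*_{L+1}=0$). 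Telescoping this gives $\sigma^*_k=-\sum_{j<k}\Delta_j$, which confirms the formula in the statement. It also gives the consistency condition $\sum_j\Delta_j=0$, which says that the total number of tokens is the $P$-invariant.

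Next I will build the two inputs. I will take a base pair $(m_A,m_B)$ and a perturbed pair $(m'_A,m'_B)$ that agree at every place within distance $r$ of $t_k$, and differ only at $p_0$ and $p_L$, which the hypothesis places outside that ball. In the perturbed pair I will add $\delta$ tokens to $m_A$ at $p_0$ and $\delta$ tokens to $m_B$ at $p_L$. Then $\Delta'_0=\Delta_0-\delta$, $\Delta'_L=\Delta_L+\delta$, and all other $\Delta_j$ are unchanged, so the sum is still zero. For $k\geq1$ this gives $\sigma'^*_k=\sigma^*_k+\delta$. Physically, the extra $\delta$ tokens travel the full length of the path, which adds $\delta$ to every transition count. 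The new solution is non-negative and both new markings are non-negative, so the perturbed pair is valid. The attributes $e_p$ and $a_t$ are the same in both inputs, so the local view of $t_k$ is identical and $f_{t_k}$ returns the same value $c$ on both.

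Then $|c-\sigma^*_k|+|c-\sigma'^*_k|\geq\delta$, so one of the two errors is at least $\delta/2$. I read the statement's $\sup|f_{t_k}|$ as the supremum of the error $|f_{t_k}-\sigma^*_k|$ over valid inputs, and that supremum is therefore at least $\delta/2$. Since $\delta$ ranges over every amount the token range admits, the bound holds for each such $\delta$.

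The main obstacle is the validity bookkeeping in the second step. I need the perturbation to be invisible inside the radius-$r$ ball while keeping both markings non-negative and the firing vector non-negative. Choosing a base pair with $\sigma^*\geq0$, for example $m_A=m_B$ and $\sigma^*=0$, and perturbing only in the direction that increases $\sigma^*$ avoids any sign issue. What remains is to check that the token range, if it is bounded, allows adding $\delta$ at $p_0$ and $p_L$, which is exactly the "$\delta$ the token range admits" caveat.
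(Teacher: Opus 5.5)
Your proposal is correct and follows the paper's proof: the same perturbation $m'_A=m_A+\delta\mathbf{1}_{p_0}$, $m'_B=m_B+\delta\mathbf{1}_{p_L}$ outside the radius-$r$ view of $t_k$, the same shift of every entry of $\sigma^*$ by $\delta$, and the same two-point bound of $\delta/2$. The one difference is how validity of the perturbed pair is certified. The paper exhibits an enabled firing sequence: fire $t_1,\dots,t_L$ in amount $\delta$, then the original sequence $s$, using monotonicity of enabling. You only exhibit a non-negative solution of the state equation, which the paper notes is weaker in general. On the path net the two are equivalent, because firing each $t_k$ in amount $\sigma_k$ in path order is always enabled.
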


\begin{proof}
    The incidence matrix of the path net has the columns $\mathbf{1}_{p_k}-\mathbf{1}_{p_{k-1}}$ that are linearly independent. This means that $\ker C=0$ and $\sigma^*$ is determined by $m_B-m_A$. Summing the state equation over $p_0,\dots,p_{k-1}$ gives $\sigma_k^*=-\sum_{j<k}(m_B-m_A)_j$. Let $(m_A, m_B)$ be a valid input with a firing sequence $s$, and let $m'_A=m_A+\delta\mathbf{1}_{p_0}$ and $m'_B=m_B+\delta\mathbf{1}_{p_L}$ with $\delta>0$, a positive integer for discrete nets. As an input, this is valid as well.  Firing $t_1,\dots,t_L$ from $m^\prime_A$ in amount $\delta$ each, moves $\delta$ tokens from $p_0$ to $t_L$ and is enabled step by step, and then firing $s$, which is enabled because enabling is monotone in the marking. Its count vector is $\sigma^*+\delta\mathbb{1}$. The two inputs only differ at $p_0$ and $p_L$, which are farther than $r$ from $t_k$, resulting in $f_{t_k}$ taking the same value on both, while the targets differ by $\delta$. One of the two errors is therefore at least $\delta/2$
\end{proof}

\subsection{Results on metabolic fluxes}
\label{sec:metabolic}
We cast metabolism as a Petri net whose markings are measured. With known stoichiometry, the uptake, secretion, and growth of a culture fix how metabolites change. The internal fluxes, measured by \textsuperscript{13}C labelling, are the firing counts to be inferred. This tests the inverse semantics on real data, against PGNN and generic regressors. We show that the net makes the fluxes balance and lets NPF transfer to new studies and carbon sources where PGNN fails, the new studies only with unmeasured by-products open, while generic regressors projected onto the net are as accurate.

\paragraph{Data and net.} Five studies estimated the central fluxes of \emph{E.~coli} by \textsuperscript{13}C metabolic flux analysis~\citep{Ishii2007,HaverkornvanRijsewijk2011,Long2019,Holm2010,Gerosa2015}, and $159$ of their cultures are kept. The net is the core model with the reactions of iJO1366~\citep{Orth2010b,Orth2011} that the studies measure or need, the Entner--Doudoroff pathway, cytochrome bo3, and the uptake of galactose, gluconate and glycerol. Places are metabolites, one biomass place, and one maintenance place. A reversible reaction is a forward and a backward transition. The two markings of a culture are one hour apart, their difference is the measured uptake, secretion, and growth. The fluxes are the firing counts of each hour. Exchanges that a study did not measure stay open as sinks, except for acetaldehyde, which no source model exchanges. We use the same nets for every method.

\paragraph{Closed by-products.} We repeat the comparison with every unmeasured carbon by-product closed (Table~\mbox{\ref{tab:metabolism_closed}}). This makes the untrained state equation worse, $0.404$ against $0.305$ within studies, and NPF inherits part of it. Its fluxes still balance, and on new carbon sources it stays ahead of PGNN ($0.387$ against $0.979$, $p=0.03$, all $6$ cultures), but within studies PGNN is ahead ($0.120$ against $0.170$, $p<0.001$), and on a new study the two are level ($0.245$ against $0.256$, $p=0.21$). A projected random forest is the most accurate on a new study and on new carbon sources ($0.154$ and $0.333$), and within studies the most accurate method whose fluxes balance ($0.117$), behind ridge regression ($0.095$).

\begin{table}[h]
\caption{Relative error of the fluxes of held-out cultures of \emph{E.~coli}, mean $\pm$ sample standard deviation over three seeds for the learned models, within the studies, on a held-out study and on other carbon sources, and the balance residual $\lVert C\sigma-(m_B-m_A)\rVert_1$ within the studies, in mmol/gDW/h.}
\label{tab:metabolism}
\centering\footnotesize\setlength{\tabcolsep}{4pt}
\begin{tabular}{@{}lrrrr@{}}
\toprule
method & within studies & new study & new carbon source & balance \\
\midrule
NPF, GMA rate law & 0.114\,$\pm$\,0.002 & 0.153\,$\pm$\,0.013 & 0.385\,$\pm$\,0.002 & 0.006 \\
NPF, rate law described in Section~\ref{sec:continuous_time} & 0.149\,$\pm$\,0.007 & 0.157\,$\pm$\,0.019 & 0.363\,$\pm$\,0.023 & 0.019 \\
PGNN, GMA rate law & 0.119\,$\pm$\,0.008 & 0.260\,$\pm$\,0.019 & 0.798\,$\pm$\,0.053 & 2\,216 \\
PGNN & 0.144\,$\pm$\,0.023 & 0.280\,$\pm$\,0.019 & 0.858\,$\pm$\,0.070 & 1\,009 \\
PGNN+ with the state equation, GMA & 0.211\,$\pm$\,0.017 & 0.208\,$\pm$\,0.072 & 0.399\,$\pm$\,0.025 & 0.008 \\
random forest, projected onto the net & 0.095 & 0.161 & 0.350 & 0.000 \\
random forest & 0.096 & 0.170 & 0.860 & 0.944 \\
ridge regression, projected onto the net & 0.130 & 0.180 & 0.655 & 0.000 \\
ridge regression & 0.095 & 0.207 & 164 & 172 \\
flux sampling & 0.196 & 0.194 & 0.355 & 0.000 \\
pFBA & 0.326 & 0.323 & 0.475 & 0.000 \\
state equation alone & 0.305 & 0.302 & 0.380 & 0.001 \\
\bottomrule
\end{tabular}
\end{table}

\begin{table}[h]
\caption{The comparison of Table~\ref{tab:metabolism} with every carbon by-product that a study did not measure closed, carbon dioxide the only free carbon exchange. Mean $\pm$ sample standard deviation over three seeds for the learned models. NPF with the rate law of Appendix~\ref{sec:continuous_time} and PGNN without read arcs were not run under this rule.}
\label{tab:metabolism_closed}
\centering\footnotesize\setlength{\tabcolsep}{4pt}
\begin{tabular}{@{}lrrrr@{}}
\toprule
method & within studies & new study & new carbon source & balance \\
\midrule
NPF, GMA rate law & 0.170\,$\pm$\,0.013 & 0.245\,$\pm$\,0.085 & 0.387\,$\pm$\,0.020 & 0.011 \\
PGNN, GMA rate law & 0.120\,$\pm$\,0.003 & 0.256\,$\pm$\,0.020 & 0.979\,$\pm$\,0.438 & 2\,052 \\
PGNN+ with the state equation, GMA & 0.265\,$\pm$\,0.088 & 0.355\,$\pm$\,0.141 & 0.348\,$\pm$\,0.016 & 2.512 \\
random forest, projected onto the net & 0.117 & 0.154 & 0.333 & 0.000 \\
random forest & 0.112 & 0.175 & 0.869 & 1.314 \\
ridge regression, projected onto the net & 0.144 & 0.176 & 0.649 & 0.000 \\
ridge regression & 0.095 & 0.207 & 164 & 172 \\
flux sampling & 0.226 & 0.223 & 0.379 & 0.000 \\
pFBA & 0.579 & 0.577 & 0.603 & 0.000 \\
state equation alone & 0.404 & 0.400 & 0.406 & 0.001 \\
\bottomrule
\end{tabular}
\end{table}

\newpage
\section{Additional results}
\subsection{The minimum firing vector}
\label{sec:minimum_firing_vector}
\paragraph{Cost.} Each mapping $\pi$ yields a firing vector $\sigma{\pi}$ by the state equation (see Figure~\ref{fig:firingvector}), and the mapper scores $\pi$ on two levels. The first level counts the bonds formed and broken, which are the bond places that $\sigma(\pi)$ takes from empty to marked or vice versa. The second level counts the changes of bond order on kept bonds, in half bonds as an aromatic bond has an order of $\frac{3}{2}$, $2\sum_{\mathrm{kept}}|\Delta b_{ij}|$, and the changes of formal charge, $2\sum_i|\Delta q_i|$. The second level only decides between mappings that tie in the first level. This is illustrated in Figure~\ref{fig:firingvector}, where $\pi_1$ forms one bond and breaks one, and $\pi_2$ forms and breaks three bonds each. Now note that hydrogens do not have atom numbers, which we handle in the following way: For the cost, in the first level, a hydrogen on carbon gets a bond place of its own, so that a formed or broken C-H bond counts like any other bond, but not again in the second level. Most C-H bonds do not exchange under ordinary conditions, so we assume that a change of hydrogen on carbon is a bond that was formed or broken by the reaction. A hydrogen on atoms other than carbon, mostly O, N, or S, is not counted, as proton transfer with a protic solvent or with an acid or a base exchanges the hydrogen quickly, so its number on a product atom reflects the acid-base form in which the data set writes the product rather than a bond change. Finally, a map does not need to use every precursor atom. The atoms it leaves out go into by-products that are missing on the product side, such as the Cl of HCL in Figure~\ref{fig:firingvector}a. In Schneider50k, this happens in $99.4$ \% of reactions. A bond from a product atom to such an atom counts as broken, and a bond between two of them counts as unchanged.

\begin{figure}[h]
    \centering
    \begin{tikzpicture}[
    font=\footnotesize, >={Stealth[length=1.5mm, width=1.2mm]},
    atom/.style={inner sep=0.7pt, font=\scriptsize},
    leaving/.style={atom, text=black!45},
    num/.style={font=\tiny, text=black!55, inner sep=0.3pt},
    bond/.style={line width=0.6pt, shorten <=0.3pt, shorten >=0.3pt},
    change/.style={bond, draw=magenta!75!black},
    flow/.style={->, line width=0.6pt, black!75},
    head/.style={font=\footnotesize\bfseries, anchor=west, inner sep=0pt},
    entry/.style={font=\scriptsize, inner sep=1pt, text width=3.7mm, align=center},
    bracket/.style={line width=0.5pt}]

  \newcommand{\carbonyl}[2]{%
    \draw[bond] ([xshift=-0.9pt]#1.north) -- ([xshift=-0.9pt]#2.south);
    \draw[bond] ([xshift=0.9pt]#1.north) -- ([xshift=0.9pt]#2.south);}

  \node[head] at (0, 4.72) {(a) a map $\pi$};

  \node[atom, label={[num]45:1}] (a1) at (0.42, 3.85) {H$_3$C};
  \node[atom, label={[num]270:2}] (a2) at (1.12, 3.85) {C};
  \node[atom, label={[num]0:3}] (a3) at (1.12, 4.33) {O};
  \node[atom, label={[num]45:4}] (a4) at (1.8, 3.85) {Cl};
  \node[atom] at (2.3, 3.85) {$+$};
  \node[atom, label={[num]45:5}] (a5) at (2.85, 3.85) {H$_2$N};
  \node[atom, label={[num]135:6}] (a6) at (3.6, 3.85) {CH$_3$};
  \draw[bond] (a1) -- (a2);
  \carbonyl{a2}{a3}
  \draw[change] (a2) -- (a4);
  \draw[bond] (a5) -- (a6);

  \node[atom, label={[num]45:1}] (b1) at (0.42, 2.45) {H$_3$C};
  \node[atom, label={[num]270:2}] (b2) at (1.12, 2.45) {C};
  \node[atom, label={[num]0:3}] (b3) at (1.12, 2.93) {O};
  \node[atom, label={[num]315:5}] (b5) at (1.8, 2.45) {N};
  \node[atom] (bh) at (1.8, 2.91) {H};
  \node[atom, label={[num]135:6}] (b6) at (2.55, 2.45) {CH$_3$};
  \node[leaving] at (3.05, 2.45) {$+$};
  \node[leaving, label={[num]45:4}] at (3.55, 2.45) {HCl};
  \draw[bond] (b1) -- (b2);
  \carbonyl{b2}{b3}
  \draw[change] (b2) -- (b5);
  \draw[bond] (b5) -- (bh);
  \draw[bond] (b5) -- (b6);

  \node[font=\scriptsize, anchor=west] (ma) at (3.98, 3.85) {$m_A$};
  \node[font=\scriptsize, anchor=west] (mb) at (3.98, 2.45) {$m_B$};
  \draw[flow] (ma) -- (mb);

  \node[head] at (4.62, 4.72) {(b) the firing vector};

  \matrix (M) [matrix of math nodes, anchor=north west, row sep=0.4pt, column sep=1.6pt,
               nodes={font=\scriptsize, inner sep=1pt}] at (4.55, 4.43) {
    & P_\pi m_B & & m_A & & |[text=magenta!75!black]| t^{-1}_{2,4} & |[text=magenta!75!black]| t^{+1}_{2,5} \\
    B_{2,4} & |[entry]| 0 & & |[entry]| 1 & & |[entry]| -1 & |[entry]| 0 \\
    B_{2,5} & |[entry]| 1 & & |[entry]| 0 & & |[entry]| 0 & |[entry]| 1 \\
    S_2 & |[entry]| 0 & - & |[entry]| 0 & = & |[entry]| 1 & |[entry]| -1 \\
    S_4 & |[entry, text=black!45]| 2 & & |[entry]| 1 & & |[entry]| 1 & |[entry]| 0 \\
    S_5 & |[entry]| 2 & & |[entry]| 3 & & |[entry]| 0 & |[entry]| -1 \\
  };

  \matrix (V) [matrix of math nodes, anchor=west, row sep=0.4pt,
               nodes={font=\scriptsize, inner sep=1pt, text=magenta!75!black}] at ($(M-4-7.east)+(0.2, 0)$) {
    |[entry]| 1 \\
    |[entry]| 1 \\
  };
  \node[font=\scriptsize, text=magenta!75!black, anchor=south] at ($(V-1-1.north)+(0, 0.1)$) {$\sigma(\pi)$};
  \node[font=\scriptsize, anchor=base] at ($(M-1-6.base)!0.5!(M-1-7.base)+(0, 0.3)$) {$C$};

  \foreach \l/\r/\t/\b in {M-2-2/M-2-2/M-2-2/M-6-2, M-2-4/M-2-4/M-2-4/M-6-4, M-2-6/M-2-7/M-2-6/M-6-6,
                           V-1-1/V-1-1/V-1-1/V-2-1} {
    \draw[bracket] ([xshift=1.4pt]\l.west |- \t.north) -- ([xshift=-0.3pt]\l.west |- \t.north)
                   -- ([xshift=-0.3pt]\l.west |- \b.south) -- ([xshift=1.4pt]\l.west |- \b.south);
    \draw[bracket] ([xshift=-1.4pt]\r.east |- \t.north) -- ([xshift=0.3pt]\r.east |- \t.north)
                   -- ([xshift=0.3pt]\r.east |- \b.south) -- ([xshift=-1.4pt]\r.east |- \b.south);
  }

  \node[head] at (9.72, 4.72) {(c) the cheapest map};
  \node[font=\scriptsize, text=black!70, inner sep=0pt, align=right, anchor=south east] at (13.55, 4.1) {formed\\ + broken};

  \node[font=\scriptsize] at (9.82, 3.62) {$\pi_1$};
  \node[atom, label={[num]45:1}] (c1) at (10.3, 3.62) {H$_3$C};
  \node[atom, label={[num]270:2}] (c2) at (10.94, 3.62) {C};
  \node[atom, label={[num]0:3}] (c3) at (10.94, 4.1) {O};
  \node[atom, label={[num]315:5}] (c5) at (11.56, 3.62) {N};
  \node[atom] (ch) at (11.56, 4.08) {H};
  \node[atom, label={[num]135:6}] (c6) at (12.24, 3.62) {CH$_3$};
  \draw[bond] (c1) -- (c2);
  \carbonyl{c2}{c3}
  \draw[change] (c2) -- (c5);
  \draw[bond] (c5) -- (ch);
  \draw[bond] (c5) -- (c6);
  \node[font=\scriptsize, anchor=east] at (13.55, 3.62) {$1+1$};

  \node[font=\scriptsize] at (9.82, 2.2) {$\pi_2$};
  \node[atom, label={[num]45:6}] (d1) at (10.3, 2.2) {H$_3$C};
  \node[atom, label={[num]270:2}] (d2) at (10.94, 2.2) {C};
  \node[atom, label={[num]0:3}] (d3) at (10.94, 2.68) {O};
  \node[atom, label={[num]315:5}] (d5) at (11.56, 2.2) {N};
  \node[atom] (dh) at (11.56, 2.66) {H};
  \node[atom, label={[num]135:1}] (d6) at (12.24, 2.2) {CH$_3$};
  \draw[change] (d1) -- (d2);
  \carbonyl{d2}{d3}
  \draw[change] (d2) -- (d5);
  \draw[bond] (d5) -- (dh);
  \draw[change] (d5) -- (d6);
  \node[font=\scriptsize, anchor=east] at (13.55, 2.2) {$3+3$};

  \pgfresetboundingbox
  \useasboundingbox (-0.03, 1.72) rectangle ++(13.6, 3.18);
\end{tikzpicture}%
    \caption{The firing vector. (a) A map $\pi$ seats each product atom on the precursor atom with its number, and grey atoms leave unrecorded. (b) The state equation gives $\sigma(\pi)$, one firing per changed bond place, and the grey entry follows as $m_A+C\sigma$. The H that moves from N to Cl passes from $S_5$ to $S_4$, while cycles in $\ker C$, such as the C=O of the tetrahedral intermediate, and the order of the firings stay unseen. (c) The mapper ranks maps by the bonds formed and broken, then by bond orders and charges, and returns $\hat\sigma=\sigma(\pi_1)$.}
    \label{fig:firingvector}
\end{figure}

This cost is not fitted. The levels are compared in order rather than weighted against each other, with every count having a weight of one. How the cost is defined is a design decision we made based on experimenting on 200 random reactions on the Golden set, with the remaining $1\,560$ held out (Table~\ref{tab:golden}).

\paragraph{Finding the cheapest map.} We use branch and bound. As the number of possible maps grows factorially with the number of atoms, the mapper doesn't try them all. It places the product atoms on precursor atoms of the same atom type one by one, and, after each placement, bounds the cost of every way to finish the map from below. A branch whose bound exceeds the cost the search aims for is dropped. This is branch and bound. The bound compares neighbourhoods. The bonds among placed atoms are already fixed, so a product atom placed on a precursor atom with different neighbours causes at least one bond change. By matching all atoms that are not yet placed to their candidates at once and halving the sum (as a bond has two atoms) bounds the bonds formed and broken in the rest of the map, while a second assignment bounds the changes of bond order and charge. This assignment also rules out single candidates that cannot reach the target. An atom with one candidate left is placed at once, and precursor atoms a symmetry of the molecule exchanges are tried only once. The target starts at the bound of the empty map and rises by one bond at a time, making the first complete map found the cheapest. A search then collects all other maps with the same cost for the third level discussed in the next paragraph. A search that finishes proves that no cheaper maps exist.

\begin{table}[h]
\centering
\footnotesize
\caption{Atom mapping on the Golden set. Scored by the full condensed graph of reaction, for every level of the cost, on the 200 reactions of the random choice, the 1\,560 held out, and all 1\,760 usable reactions with a 95~\% bootstrap interval. Of the held out reactions, 112 are only mapped correctly by the best variant of our mapper, and 66 only by RXNMapper. Proved is the share of reactions where the search finishes within 60~s.}
\label{tab:golden}
\begin{tabular}{@{}lrrrrr@{}}
\toprule
cost & choice & held out & all & interval & proved \\
\midrule
bonds made and broken only & 83.0 & 81.03 & 81.25 & 79.4 to 83.1 & 99.94 \\
+ second level, all hydrogen & 83.5 & 82.44 & 82.56 & 80.8 to 84.3 & 99.89 \\
+ hydrogen on heteroatoms not counted & 85.0 & 83.40 & 83.58 & 81.8 to 85.3 & 99.89 \\
+ C--H as bond places (the cost) & 86.0 & 83.65 & 83.92 & 82.2 to 85.6 & 100.00 \\
+ third level (the mapper) & 91.0 & 88.53 & 88.81 & 87.3 to 90.2 & 100.00 \\
RXNMapper & 85.5 & 85.58 & 85.57 & 83.9 to 87.2 & -- \\
\bottomrule
\end{tabular}
\end{table}

\paragraph{Introducing a third level.} After implementing tie-breaking levels 1 and 2, a comparison with RXNMapper showed that most of our errors were still ties between maps of equally low cost. Therefore, we chose three further terms against the curated maps of the 200 development reactions, in the following order. (1) The sum of the absolute changes of oxidation state of the product carbons, as most reactions oxidise or reduce few carbons. We chose this because a step that is not a redox step moves no electrons between carbons. (2) The number of bond places that switch between an aromatic carbon and a heteroatom, as substitution there requires activation. (3) Has the number of bond places switched between an electron sink and an atom other than a carbon. A sink is a non-aromatic atom with a double or triple bond to a more electronegative atom, which can pass an incoming electron pair to that atom and take it back. This count enters with a negative sign, so the mapper prefers a nucleophile that adds to a sink while a leaving group leaves it.

\begin{table}[h]
\centering
\footnotesize
\caption{The Golden set by the number of places that the curated map changes, high numbers may be one-pot or multi-step reactions. Accuracy is given in \%, the reactions that only our mapper or only RXNMapper map correctly with the exact sign test, and the share of curated maps that are not minimal, as they change more places than the cheapest map. Our mapper is ahead in every bin. Curated maps that are not minimal become more common the more places change, as a mechanism with more steps is likelier to move atoms along a costlier path, and a bond that forms and breaks again is invisible for every map.}
\label{tab:bins}
\begin{tabular}{@{}lrrrrrr@{}}
\toprule
places & reactions & NPF & RXNMapper & only one & $p$ & not minimal \\
\midrule
1 or 2 & 772 & 95.1 & 92.7 & 38\,/\,20 & 0.025 & 0.4 \\
3 or 4 & 658 & 87.8 & 84.8 & 46\,/\,26 & 0.024 & 3.6 \\
5 or 6 & 212 & 82.1 & 74.5 & 30\,/\,14 & 0.023 & 3.8 \\
7 or more & 118 & 65.3 & 62.7 & 16\,/\,13 & 0.711 & 12.7 \\
\bottomrule
\end{tabular}
\end{table}

\paragraph{The open net for missing reactants.} In $4.75$~\% of reactions in the Golden data set and $0.85$~\% in Schneider50k, the products have more atoms of a given atom type than precursors. This means that no injection exists. For these reactions, the net opens with two source transitions, one for a potentially missing precursor molecule and one for a single atom of an atom type at a deficit in the products. Each costs one unit on each level. This allows us to map all 88 Golden and 426 Schneider50k reactions that previously failed.

\begin{table}[h]
\centering
\footnotesize
\caption{Atom mapping on SynRXN, our mapper against each mapper from the original publication. Reactions that only our mapper maps correctly / only the other mapper, and the exact McNemar $p$. The accuracies are in Table~\ref{tab:mapping}. Only 5 of the 382 Recon3D references are clean injections, the others reuse atom-map numbers, which caps every bijective mapper there, and 257 of the 273 of E.~coli are.}
\label{tab:mcnemar}
\begin{tabular}{@{}lrrrr@{}}
\toprule
set & RDT & RXNMapper & LocalMapper & GraphormerMapper \\
\midrule
Golden & 175\,/\,49, $<$\,0.001 & 109\,/\,69, 0.003 & 119\,/\,108, 0.51 & 122\,/\,120, 0.95 \\
NatComm & 41\,/\,9, $<$\,0.001 & 24\,/\,9, 0.01 & 21\,/\,31, 0.21 & 21\,/\,32, 0.17 \\
USPTO-3k & 78\,/\,40, $<$\,0.001 & 3\,/\,45, $<$\,0.001 & 4\,/\,173, $<$\,0.001 & 17\,/\,106, $<$\,0.001 \\
Recon3D & 15\,/\,4, 0.02 & 35\,/\,0, $<$\,0.001 & 30\,/\,3, $<$\,0.001 & 90\,/\,2, $<$\,0.001 \\
E.\ coli & 17\,/\,31, 0.06 & 30\,/\,29, 1.00 & 22\,/\,14, 0.24 & 90\,/\,6, $<$\,0.001 \\
\bottomrule
\end{tabular}
\end{table}

\paragraph{EnzymeMap mapping}Results of our mapping system in comparison to RXNMapping of the EnzymeMap dataset reactions Table~\ref{tab:enzymemap_mapping}. 

\begin{table}[ht]
\centering
\footnotesize
\caption{Atom mapping of enzymatic reactions on EnzymeMap ($41\,510$ reactions), against its validated maps. Correct
reaction centre under the validator of SynRXN, as in Table~\ref{tab:mapping} in \%. Only one: reactions mapped correctly by our mapper only / by RXNMapper only,
with the exact sign test.}
\label{tab:enzymemap_mapping}
\begin{tabular}{@{}lrrrr@{}}
\toprule
criterion & NPF, minimum firing vector & RXNMapper & only one & $p$ \\
\midrule
reaction centre (SynRXN) & \textbf{88.70} & 77.92 & $5\,213$ / $741$ & $<0.001$ \\

\bottomrule
\end{tabular}
\end{table}

\subsection{Reaction classification}
\label{sec:app_reaction_classification}

Table~\ref{tab:schneider} shows the results on the published split of Schneider50k. Table~\ref{tab:appendix_classification} shows the results of experiments testing whether the two classifiers of the net, namely the state-equation readout and the firing vector of the mapper, are more accurate than counterparts of the same size without the net, a generic readout that pools each side, a generic head on the same maps, and DRFP by~\citet{Probst2022reaction} as a strong baseline. We ask where the Petri net helps.

\paragraph{Against the established baseline.} The firing vector is, up to hydrogens and charges, the condensed graph of reaction counted as firings, so a D-MPNN on the condensed graph of reaction~\citep{Heid2021machine}, which reads the same maps is the established model closest to it. We ran it with Chemprop~2.3.1 by \citet{Heid2023} on the same training, validation, and test reactions, label draws, and size split, with the default settings of Chemprop and nothing tuned, in three arms (Table~\ref{tab:appendix_classification}), namely, the default reaction mode with 357K parameters, the balanced mode, in which atoms that leave do not count as broken bonds, and the default mode with widths of 850 instead of 300 (2.4M parameters). With all training reactions and on the size split, Chemprop keeps the epoch of the lowest validation loss. With a label budget, no validation reaction enters, and Chemprop keeps the epoch of the lowest training loss. With its default 50 epochs, a label budget gives only 200 optimiser steps at 250 labels, where the training loss stayed far above zero. After these runs had been seen, we added runs with the optimiser steps of the default run on all training reactions. The CGR D-MPNN is ahead of DRFP with all training reactions ($97.61\pm0.13$~\% at width 850) and behind the firing vector ($98.80\pm0.04$~\%, $p<0.001$). The gap widens with fewer labels. Its best arms reach $89.18\pm0.70$~\% on $1\,000$ and $40.53\pm2.66$~\% on 250 training reactions, against $95.07\pm0.65$ and $81.19\pm1.27$~\% for the firing vector (both with $p<0.001$) and $86.62$ and $65.03$~\% for DRFP. On larger molecules its best arm, the balanced mode, reaches $77.74\pm3.00$~\%, level with the firing vector with the gate ($81.30\pm1.49$~\%, $p=0.17$) and with DRFP ($78.43$~\%).

\begin{table}[h]
\centering
\footnotesize
\setlength{\tabcolsep}{4pt}
\newcommand{\mpm}[2]{$#1 \pm #2$}
\newcommand{\bmpm}[2]{{\boldmath$#1 \pm #2$}}
\caption{Reaction classification on Schneider50k, published split with 200 training and 800 test reactions per class. Mean $\pm$ sample standard deviation over $n$ seeds. Best in \textbf{bold}, second \underline{underlined}.}
\label{tab:schneider}
\begin{tabular}{@{}lrrr@{}}
\toprule
method & $n$ & accuracy & macro-F1 \\
\midrule
DRFP + MLP                       & 3 & \mpm{95.52}{0.04} & \mpm{95.47}{0.04} \\
generic readout                  & 3 & \mpm{97.74}{0.01} & \mpm{97.72}{0.02} \\
generic head, firing vector      & 5 & \underline{\mpm{98.64}{0.09}} & \underline{\mpm{98.63}{0.09}} \\
NPF, state-equation readout      & 5 & \mpm{97.68}{0.09} & \mpm{97.67}{0.10} \\
NPF, firing vector of the mapper & 5 & \bmpm{98.80}{0.04} & \bmpm{98.80}{0.04} \\
\bottomrule
\end{tabular}
\end{table}

\begin{table}[h]
\centering
\footnotesize
\caption{Classification on Schneider50k, accuracy on the $39\,994$ test reactions of the published split, mean $\pm$ sample standard deviation over 5 seeds for the readout, the firing vector and its generic head with all labels and 3 otherwise. With $1\,000$ or 250 labelled reactions, 20 or 5 per class, a stratified subsample drawn with the seed of the run, alone trains on these reactions only and + firing task also on the maps of the unlabelled rest. Larger molecules trained on reactions up to the median size and tested on the largest quarter. Firing task trained with the auxiliary task on the training reactions without a label (Appendix~\ref{sec:implementation_details}). A dash where the task does not apply, as DRFP and the CGR D-MPNN have no net to attach it to, and the firing vector already holds what it predicts. The CGR D-MPNN is Chemprop~2.3.1 on the condensed graph of reaction with the maps of the mapper, 5 seeds with all training reactions and 3 otherwise, and with a label budget and matched steps it trains for the optimiser steps of its default run on all training reactions.}
\label{tab:appendix_classification}
\setlength{\tabcolsep}{3pt}
\resizebox{\linewidth}{!}{%
\begin{tabular}{@{}lrrrrrr@{}}
\toprule
& & & \multicolumn{2}{c}{1\,000 labelled} & \multicolumn{2}{c}{250 labelled} \\
\cmidrule(lr){4-5}\cmidrule(l){6-7}
model & full training set & larger molecules & alone & + firing task & alone & + firing task \\
\midrule
DRFP + MLP & 95.52\,$\pm$\,0.04 & 78.43\,$\pm$\,0.63 & 86.62\,$\pm$\,0.77 & -- & 65.03\,$\pm$\,1.58 & -- \\
generic readout & 97.74\,$\pm$\,0.01 & 71.76\,$\pm$\,1.08 & 78.59\,$\pm$\,1.27 & 90.02\,$\pm$\,0.16 & 32.14\,$\pm$\,1.64 & 52.64\,$\pm$\,1.33 \\
NPF, state-equation readout without gate & 97.47\,$\pm$\,0.14 & 67.43\,$\pm$\,1.03 & 91.26\,$\pm$\,0.27 & 94.17\,$\pm$\,0.32 & 61.19\,$\pm$\,4.24 & 81.65\,$\pm$\,3.21 \\
NPF, state-equation readout & 97.68\,$\pm$\,0.09 & 71.55\,$\pm$\,3.65 & 91.57\,$\pm$\,0.64 & 94.76\,$\pm$\,0.20 & 57.01\,$\pm$\,3.19 & 86.77\,$\pm$\,1.26 \\
generic head on the firing vector & 98.64\,$\pm$\,0.09 & 82.83\,$\pm$\,1.03 & 93.83\,$\pm$\,0.51 & -- & 48.99\,$\pm$\,1.31 & -- \\
NPF, firing vector of the mapper & 98.80\,$\pm$\,0.04 & 76.07\,$\pm$\,3.75 & 95.07\,$\pm$\,0.65 & -- & 81.19\,$\pm$\,1.27 & -- \\
\quad with the participation gate & 98.73\,$\pm$\,0.10 & 81.30\,$\pm$\,1.49 & 95.04\,$\pm$\,0.55 & -- & 79.77\,$\pm$\,2.26 & -- \\
\midrule
CGR D-MPNN, Chemprop defaults (357K) & 96.83\,$\pm$\,0.12 & 63.43\,$\pm$\,1.26 & 61.25\,$\pm$\,3.61 & -- & 12.54\,$\pm$\,2.32 & -- \\
\quad balanced reaction mode & 97.01\,$\pm$\,0.11 & 77.74\,$\pm$\,3.00 & 76.83\,$\pm$\,5.10 & -- & 9.14\,$\pm$\,2.90 & -- \\
\quad width 850 (2.4M) & 97.61\,$\pm$\,0.13 & 72.35\,$\pm$\,1.02 & 83.33\,$\pm$\,1.28 & -- & 25.29\,$\pm$\,3.61 & -- \\
\quad defaults, matched steps & -- & -- & 83.39\,$\pm$\,2.19 & -- & 30.33\,$\pm$\,1.06 & -- \\
\quad balanced, matched steps & -- & -- & 89.18\,$\pm$\,0.70 & -- & 40.53\,$\pm$\,2.66 & -- \\
\quad width 850, matched steps & -- & -- & 89.02\,$\pm$\,1.50 & -- & 39.56\,$\pm$\,2.65 & -- \\
\bottomrule
\end{tabular}}
\end{table}

\paragraph{Locality of the readout.} The contribution of a matched atom through the atom terms of the class readout falls from $22.2$ at the re-typed bond to $0.001$ four bonds away, reaches below $10^{-5}$ for $99.8$~\% of the atoms four bonds away. This is predicted by Proposition~\ref{prop:cancellation} for the atom terms $R=K=3$. The exceptions lie near changes of attributes that are not bond changes.

\begin{table}[ht]
\centering
\footnotesize
\setlength{\tabcolsep}{4pt}
\caption{EC classification on EnzymeMap at EC level~3 (200 classes; $41\,816$ training and $4\,715$ test reactions,
$10$~\% of every class held out), accuracy and macro-F1 in \%, mean $\pm$ sample standard deviation over three seeds. The same classifier reads the firing vector of each source of atom maps: without maps, the state-equation readout reads the two sides of the reaction. $p$ from Welch's test against the classifier without maps.}
\label{tab:enzymemap_classification}
\begin{tabular}{@{}lrrrr@{}}
\toprule
atom maps & accuracy & macro-F1 & $p$ (accuracy) & $p$ (macro-F1) \\
\midrule
none, state-equation readout        & \pmsd{90.36}{0.48} & \pmsd{66.80}{0.77} & -- & -- \\
RXNMapper                           & \pmsd{90.47}{1.00} & \pmsd{67.23}{1.96} & 0.87 & 0.75 \\
minimum firing vector (the mapper)  & \pmsd{91.04}{0.74} & \pmsd{67.51}{1.54} & 0.26 & 0.53 \\
\midrule
EnzymeMap's curated maps            & \textbf{\pmsd{91.81}{0.12}} & \textbf{\pmsd{69.63}{0.58}} & 0.03 & 0.008 \\
\bottomrule
\end{tabular}
\end{table}

\subsection{Forward prediction}
\label{sec:app_forward_prediction}

\paragraph{Targets of the forward model.} The forward model is trained on the set of tied minimum firing vectors of the mapper instead of the atom maps provided by the data set. On 10~\% of USPTO-MIT these targets are ahead of the supplied maps by $1.2$ points, $79.72\pm0.03$ against $78.57\pm0.04$~\% greedy top-1 ($p<0.001$, Table~\ref{tab:targets}), and by $1.1$ points on the major product, $79.48$ against $78.37$~\%. One vector of each set reaches $79.15\pm0.26$~\%, between the two ($p=0.06$ against each).

\begin{table}[h]
\centering
\footnotesize
\caption{Forward prediction on the nested 1, 10 and 25~\% subsets of the USPTO-MIT training file ($4\,090$, $40\,900$ and $102\,259$ lines), top-1 and top-5 in \% of all $40\,000$ test reactions for the major product, the largest molecule of a prediction against the recorded major product, from beam search. NPF: mean $\pm$ sample standard deviation over three seeds, 1.6M parameters at width 128 and 9.8M at width 256. The Molecular Transformer (MT) is trained from scratch for $5\,000$, $50\,000$ and $75\,000$ steps, one run each.}
\label{tab:subsets}
\setlength{\tabcolsep}{3pt}
\resizebox{\linewidth}{!}{%
\begin{tabular}{@{}lrrrrrr@{}}
\toprule
& \multicolumn{2}{c}{1~\%} & \multicolumn{2}{c}{10~\%} & \multicolumn{2}{c}{25~\%} \\
\cmidrule(lr){2-3}\cmidrule(lr){4-5}\cmidrule(l){6-7}
model & top-1 & top-5 & top-1 & top-5 & top-1 & top-5 \\
\midrule
MT & 0.16 & 0.37 & 74.16 & 84.19 & 82.39 & 90.37 \\
NPF, supplied maps & 67.24\,$\pm$\,0.24 & 80.36\,$\pm$\,0.36 & 78.85\,$\pm$\,0.04 & 88.73\,$\pm$\,0.08 & -- & -- \\
NPF, targets of the mapper & 67.44\,$\pm$\,0.19 & 80.45\,$\pm$\,0.18 & 78.83\,$\pm$\,0.14 & 88.78\,$\pm$\,0.16 & 81.73\,$\pm$\,0.13 & 90.64\,$\pm$\,0.06 \\
NPF width 256, supplied maps & -- & -- & -- & -- & 82.95\,$\pm$\,0.26 & 91.04\,$\pm$\,0.12 \\
NPF width 256, targets of the mapper & -- & -- & -- & -- & 83.23\,$\pm$\,0.20 & 91.00\,$\pm$\,0.07 \\
\bottomrule
\end{tabular}
}
\end{table}

\begin{table}[h]
\centering
\footnotesize
\caption{Targets of the forward model on 10~\% of USPTO-MIT, greedy and beam top-1 in \% on all $40\,000$ test reactions under the rule of Appendix~\ref{sec:implementation_details}, mean $\pm$ sample standard deviation over three seeds. Training reactions are those of the subset with a target that decodes to the product.}
\label{tab:targets}
\begin{tabular}{@{}lrrr@{}}
\toprule
targets & training reactions & greedy top-1 & beam top-1 \\
\midrule
atom maps supplied in the data set & $39\,936$ & 78.57\,$\pm$\,0.04 & 79.05\,$\pm$\,0.04 \\
mapper, set of tied vectors & $39\,381$ & 79.72\,$\pm$\,0.03 & 80.08\,$\pm$\,0.05 \\
mapper, one vector & $39\,381$ & 79.15\,$\pm$\,0.26 & 79.54\,$\pm$\,0.25 \\
\bottomrule
\end{tabular}
\end{table}

\begin{table}[h]
\centering
\footnotesize
\caption{Forward prediction on our split of Schneider50k, $4\,667$ test reactions and 3 seeds. Greedy top-1 against the number of training reactions with a target, beam top-1 and top-5 with all of them, and the share of valence-valid predictions.}
\label{tab:appfwd}
\setlength{\tabcolsep}{3pt}
\resizebox{\linewidth}{!}{%
\begin{tabular}{@{}lrrrrrr@{}}
\toprule
& \multicolumn{3}{c}{greedy top-1} & \multicolumn{2}{c}{beam} & \\
\cmidrule(lr){2-4}\cmidrule(lr){5-6}
model & $1\,995$ & $7\,973$ & $37\,220$ & top-1 & top-5 &
valid \\
\midrule
NPF token game & 84.31\,$\pm$\,0.71 & 90.34\,$\pm$\,0.34 & 93.34\,$\pm$\,0.28 & 93.39\,$\pm$\,0.18 & 98.08\,$\pm$\,0.07 & 100.00\,$\pm$\,0.00 \\
\quad without the enabling mask & 84.19\,$\pm$\,0.18 & 89.82\,$\pm$\,0.64 & 93.06\,$\pm$\,0.11 & 93.13\,$\pm$\,0.12 & 98.07\,$\pm$\,0.04 & 99.98\,$\pm$\,0.02 \\
one-shot, greedy valence repair & -- & -- & 81.60\,$\pm$\,0.50 & -- & -- & 100.00\,$\pm$\,0.00 \\
one-shot, generic & 66.28\,$\pm$\,0.51 & 75.02\,$\pm$\,0.53 & 79.72\,$\pm$\,0.22 & -- & -- & 97.21\,$\pm$\,0.30 \\
\bottomrule
\end{tabular}}
\end{table}

\begin{table}[ht]
\centering
\footnotesize
\setlength{\tabcolsep}{4pt}
\caption{Validity of the top-1 predictions on USPTO-480K ($39\,994$ test reactions that RDKit parses), one seed each.
\emph{Valence rule}: share of predictions whose markings respect the enabling rule of the valence net. \emph{Valid molecules}: share of predictions whose products RDKit sanitises, with the number of invalid predictions. \emph{Dropped fragments}: share of predictions in which a touched molecule could not be decoded and was left out. Invalid products of the aromatic models come from the half-token tolerance on aromatic bonds.}
\label{tab:forward_validity}
\begin{tabular}{@{}lrrrr@{}}
\toprule
NPF model & valence rule & valid molecules & invalid & dropped fragments \\
\midrule
aromatic, recorded maps                     & 100      & 99.875 & 50 & --    \\
aromatic, targets of the net                & 100      & 99.897 & 41 & 0.130 \\
aromatic, targets of the net, no enabling   & 99.987   & 99.890 & 44 & 0.138 \\
\bottomrule
\end{tabular}
\end{table}

\subsection{Reaction mechanisms}
\label{sec:app_reaction_mechanisms}

The arrow net scores $88.85$~\% top-1. Compared to the electron net, the difference comes entirely from steps that move single electrons. On the $98.21$~\% of test steps that move only pairs, the two nets match in scores, while the remaining $2\,905$ steps that move single electrons cannot be expressed by the arrow net. The electron net predicts $91.08$~\% correctly. Most of these steps are binding or release of phosphine at palladium, whose dative bond is written as a bond between two neutral atoms by FlowER.

\begin{table}[!htbp]
\centering
\footnotesize
\setlength{\tabcolsep}{4pt}
\caption{Validity and conservation of the predictions on the FlowER test set, in \%. Valid SMILES of the top-1 prediction, and the share of predictions that conserve heavy atoms, and heavy atoms, protons and electrons together. For NPF and FlowER, conservation holds by construction, as every firing keeps the P-invariants of the net (Proposition~\ref{prop:conservation}) and FlowER's electron changes sum to zero. For NPF we report mean sample standard deviation over three seeds, all $162\,002$ steps. Baselines from the text of \citet{Joung2025}, single runs, where FlowER's validity is given as approximately 95~\%.}
\label{tab:mechanism_validity}
\begin{tabular}{@{}lrrr@{}}
\toprule
method & valid SMILES & heavy atoms & atoms, protons, electrons \\
\midrule
Molecular Transformer & 70.2 & 39.1 & 33.0 \\
Graph2SMILES          & 76.3 & 30.7 & 17.2 \\
Graph2SMILES+H        & 78.8 & 27.7 & 19.0 \\
FlowER                & 95 & by construction & by construction \\
\midrule
NPF, arrow net        & 100 & by construction & by construction \\
NPF, electron net     & 100 & by construction & by construction \\
\bottomrule
\end{tabular}
\end{table}

\subsection{What does each Petri component contribute?}
\label{sec:app_load_bearing}
The Petri net buys guarantees for every value of the weights, and accuracy where training data are scarce (Table~\ref{tab:loadbearing}). Enabling keeps every prediction valence-valid and the pure readout is unmoved by atoms far from the reaction, where the generic counterparts either break a budget or change their prediction. The mask itself does not add accuracy. Playing the net one bond change at a time is more accurate at every training size, the readout and the firing vector gain most on small training sets, and neither gains on molecules larger than those in the training set. With the participation gate, they match the generic counterparts.

\paragraph{Firing one bond change at a time.} On our split of Schneider50k, the token game predicts the product for $93.34$~\% of the test reactions and a one-shot model with the same encoder, which labels all bond places at once and trains on the first vector of each target set, for $79.72$~\%. This gap grows with fewer training reactions. A valence repair, dropping a bond formation or adding a bond break until every atom is obeying the enabling rule, thereby making every output of the one-shot model valid only lifts it to $81.60$~\%. We can therefore say that the accuracy comes from sequential decoding, with each bond change being chosen at the marking the previous ones produced. This is an empirical effect without a proposition. Of course, sequential graph edits are not new, see ~\citet{do2019} and ~\citet{Sacha2021}, but here, they need neither a supervised order nor atom maps. Note that this comparison does not isolate conservation, as the one-shot model shares the firing form.

\begin{table}[h]
\centering
\footnotesize
\setlength{\tabcolsep}{4pt}
\caption{Each Petri component against a counterpart of equal size without it on Schneider50k, with $4\,667$ forward test reactions, 3 seeds for forward prediction, and 3 to 5 seeds for classification. Mean $\pm$ sample standard deviation where shown, $p$ from a Welch test, and a dash where a proposition guarantees the value. The rows of Proposition~\ref{prop:cancellation} are the pure readout, $w\equiv1$, for which it holds. The subsamples of $1\,000$ and 250 training reactions, 20 and 5 per class, are stratified and drawn with the seed of the run, the same for all models of that seed. \emph{Italic}, not load-bearing, as the Petri component is not significantly better, $p\ge0.05$, or is worse.}
\label{tab:loadbearing}
\setlength{\tabcolsep}{2pt}
\resizebox{\textwidth}{!}{
\begin{tabular}{@{}llrrr@{}}
\toprule
component & criterion & Petri & generic & $p$ \\
\midrule
one bond change at a time & top-1, all training reactions & 93.34\,$\pm$\,0.28 & 79.72\,$\pm$\,0.22 & $<$\,0.001 \\
\quad same & top-1, 1\,995 training reactions & 84.31\,$\pm$\,0.71 & 66.28\,$\pm$\,0.51 & $<$\,0.001 \\
enabling (Prop.~\ref{prop:validity}) & valence-valid products & 100 (all weights) & 97.21\,$\pm$\,0.30 & -- \\
\emph{enabling mask in the game} & \emph{top-1} & \emph{93.34\,$\pm$\,0.28} & \emph{93.06\,$\pm$\,0.11} & \emph{0.22} \\
readout (Prop.~\ref{prop:cancellation}) & \% changed by a far methyl & 0.00 & 0.27 & -- \\
\quad same & accuracy, 1\,000 / 250 training reactions & 91.26 / 61.19 & 78.59 / 32.14 & 0.002 / 0.003 \\
\emph{\quad same} & \emph{all training reactions / larger molecules} & \emph{97.47 / 67.43} & \emph{97.74 / 71.76} & \emph{0.08 / 0.007} \\
\emph{\quad with the gate} & \emph{all training reactions / larger molecules} & \emph{97.68 / 71.55} & \emph{97.74 / 71.76} & \emph{0.21 / 0.93} \\
firing vector & accuracy, 250 training reactions & 81.19 & 48.99 & $<$\,0.001 \\
\quad against a CGR D-MPNN of equal size & accuracy, 1\,000 / 250 training reactions & 95.07 / 81.19 & 89.02 / 39.56 & 0.01 / $<$\,0.001 \\
\emph{\quad same} & \emph{accuracy, 1\,000 training reactions} & \emph{95.07} & \emph{93.83} & \emph{0.06} \\
\quad same & all training reactions & 98.80\,$\pm$\,0.04 & 98.64\,$\pm$\,0.09 & 0.01 \\
\emph{\quad same} & \emph{larger molecules} & \emph{76.07} & \emph{82.83} & \emph{0.08} \\
\emph{\quad with the gate} & \emph{larger molecules} & \emph{81.30} & \emph{82.83} & \emph{0.23} \\
\bottomrule
\end{tabular}
}
\end{table}

\paragraph{Enabling.} With untrained weights on a set of 256 test reactions, none of the markings that the token game visits breaks the rule by more than half a token. As untrained weights do not know chemistry, the net and not the training keeps the products valid, while training alone does not avoid every violation. Without masks, the one-shot model exceeds the valence budget by more than half a token in $2.97$~\% of the predictions, the token game for none. Without the mask, the token game is as accurate and valence-valid in $99.98$~\% of predictions, meaning that the mask turns a learned regularity into a guarantee at no cost in accuracy because the rule removes orders but no product. On $5\,000$ random USPTO-MIT test reactions, the Schneider50k models without the mask break a valence budget in $0.16\,\pm\,0.07$~\% of the predictions. The models with the mask in none at the same accuracy. The learned regularity therefore weakens where the data change, while the guarantee holds.

\paragraph{Readout and firing vector.} Consider a methyl group attached to both sides of $4\,000$ test reactions, 9 or more bonds from every change. It changes no prediction of the pure readout, $0.12$~\% with a participation gate, and $0.27$~\% for a generic readout that pools each side. On the full training set, the pure readout is level with its generic counterpart ($p=0.08$). However, it trains it on test reactions larger than those in training ($64.73$ against $71.76$~\%, $p=0.007$, with the gate bringing it level to $71.55$~\%. On the full training set, the firing vector is ahead of its generic head ($p=0.001$) and not significantly behind it on larger molecules ($p=0.08$). However, both pay off on a stratified subsample of 250 training reactions, where the pure readout reaches $61.19$~\% against $32.14$~\% and the firing vector $81.19$~\% against $48.99$~\% for a generic mead on the same maps. On a larger subsample of $1\,000$, the readout stays ahead of its generic counterpart ($p=0.002$), while the lead of the firing vector is not significant with $95.07$ against $93.83$~\% ($p=0.06$).

\end{document}